\def\colorful{0}
\crefname{ineq}{Inequality}{Inequality}
\crefname{sub}{Subsection}{Subsection}
\crefname{sdp}{SDP}{SDP}
\crefname{lp}{LP}{LP}
\newenvironment{Ualgorithm}[1][htpb]{\def\@algocf@post@ruled{\kern\interspacealgoruled\hrule  height\algoheightrule\kern3pt\relax}\def\@algocf@capt@ruled{under}\setlength\algotitleheightrule{0pt}\SetAlgoCaptionLayout{centerline}\begin{algorithm}[#1]}
	{\end{algorithm}}
\newtheorem{theorem}{Theorem}[section]
\newtheorem{lemma}[theorem]{Lemma}
\newtheorem{informal theorem}[theorem]{Theorem (informal statement)}
\newtheorem{proposition}[theorem]{Proposition}
\newtheorem{fact}[theorem]{Fact}
\newtheorem{claim}[theorem]{Claim}
\newtheorem{remark}[theorem]{Remark}
\theoremstyle{definition}
\newtheorem{definition}[theorem]{Definition}
\newcommand{\eqdef}{\stackrel{{\mathrm {\footnotesize def}}}{=}}
\renewcommand\vec[1]{\mathbf{#1}}
\DeclareMathOperator*{\pr}{\mathbf{Pr}}
\DeclareMathOperator*{\E}{\mathbf{E}}
\newcommand{\proj}{\mathrm{proj}}
\DeclareMathOperator*{\argmin}{argmin}
\newcommand{\bx}{\mathbf{x}}
\newcommand{\by}{\mathbf{y}}
\newcommand{\bw}{\mathbf{w}}
\newcommand{\err}{\mathrm{err}}
\newcommand{\R}{\mathbb{R}}
\newcommand{\Z}{\mathbb{Z}}
\newcommand{\eps}{\epsilon}
\newcommand{\poly}{\mathrm{poly}}
\newcommand{\sgn}{\mathrm{sign}}
\newcommand{\sign}{\mathrm{sign}}
\newcommand{\opt}{\mathrm{opt}}
\newcommand{\D}{{D}}
\newcommand{\Ind}{\mathds{1}}
\newcommand{\1}{\Ind}
\newcommand{\littlesum}{\mathop{\textstyle \sum}}
\newcommand{\wh}{\widehat}
\newcommand{\wstar}{{\vec w}^{\ast}}
\newcommand{\x}{\vec x}
\newcommand{\w}{\vec w}
\newcommand{\Exx}{\E_{\x\sim \D_\x}}
\newcommand{\Ey}{\E_{(\x,y)\sim \D}}
\def\colorful{0}
\newcommand{\new}[1]{{\color{red} #1}}
\newcommand{\new}[1]{{#1}}
\title{A Near-optimal Algorithm for Learning Margin Halfspaces with Massart Noise\thanks{A conference version of this work appears in the proceedings of the Thirty-Eighth Annual Conference on Neural Information Processing Systems (NeurIPS 2024).}}
\author{Ilias Diakonikolas\thanks{Supported in part by NSF Medium Award CCF-2107079 and an H.I. Romnes Faculty Fellowship.} \\
  UW-Madison\\
  \texttt{ilias@cs.wisc.edu} \\
  \and
    Nikos Zarifis\thanks{Supported in part by NSF Medium Award CCF-2107079.} \\
  UW-Madison\\
  \texttt{zarifis@wisc.edu} \\
}
\begin{document}

\maketitle

\begin{abstract}
  We study the problem of PAC learning $\gamma$-margin halfspaces in the presence of Massart noise. 
Without computational considerations, the sample complexity of this learning problem is known to be 
$\widetilde{\Theta}(1/(\gamma^2 \eps))$. 
Prior computationally efficient algorithms for the problem incur sample complexity 
$\tilde{O}(1/(\gamma^4 \eps^3))$ and achieve 0-1 error of $\eta+\eps$, 
where $\eta<1/2$ is the upper bound on the noise rate.
Recent work gave evidence of an information-computation tradeoff, 
suggesting that a quadratic dependence on $1/\eps$ is required 
for computationally efficient algorithms. 
Our main result is a computationally efficient learner with sample complexity 
$\widetilde{\Theta}(1/(\gamma^2 \eps^2))$, nearly matching this lower bound. 
In addition, our algorithm is simple and practical, 
relying on online SGD on a carefully selected sequence of convex losses.  \end{abstract}
\setcounter{page}{0}
\thispagestyle{empty}
\newpage

\section{Introduction}

This work studies the algorithmic task 
of learning margin halfspaces in the presence of Massart noise (aka bounded label noise)~\cite{Massart2006}
with a focus on fine-grained complexity analysis. A halfspace or Linear Threshold Function (LTF) is any 
Boolean-valued function $h: \R^d \to \{ \pm 1\}$ of the form 
$h(\bx) = \sgn \left( \bw \cdot \bx -\theta \right)$, where $\bw \in \R^d$ is the weight vector and $\theta \in \R$ is the threshold. 
The function $\sign: \R \to \{ \pm 1\}$ is defined as $\sgn(t)=1$ if $t \geq 0$ 
and $\sgn(t)=-1$ otherwise.
The problem of learning halfspaces with a margin --- i.e., under the assumption 
that no example lies too close to the separating hyperplane --- is one of the earliest algorithmic problems
studied in machine learning, going back to the Perceptron algorithm~\cite{Rosenblatt:58}.

In the realizable PAC model~\cite{val84} (i.e., with clean labels), 
the sample complexity of learning $\gamma$-margin halfspaces on the unit ball in $\R^d$
is $\Theta(1/(\gamma^2\eps))$, where $\eps>0$ is the desired 0-1 error; 
see, e.g.,~\cite{SB14}\footnote{As is standard, we are 
assuming  that $d = \Omega (1/\gamma^2$); otherwise, a sample complexity 
bound of $\widetilde{O}(d/\eps)$ follows from standard VC-dimension arguments.}. 
Moreover, the Perceptron algorithm is a computationally efficient 
learner achieving this sample complexity. 
That is, without label noise, there is a sample-optimal and computationally 
efficient learner for margin halfspaces.

In this paper, we study the same problem in the Massart noise model that we now define.

\begin{definition}[PAC Learning with Massart Noise]\label{assmpt:Massart}
Let $\D$ be a distribution over $\mathcal X\times\{\pm 1\}$, and let 
$\mathcal C$ be a class of Boolean-valued functions over 
$\mathcal X$. We say that $\D$
satisfies the $\eta$-Massart noise condition with respect to $\mathcal{C}$, for some $\eta<1/2$, if there 
exists a concept $f \in \mathcal{C}$ and an unknown noise
function $\eta(\x):\mathcal X \mapsto[0,\eta]$ such that 
for $(\x,y) \sim \D$, the label $y$ satisfies: 
with probability $1-\eta(\x)$,  $y=f(\x)$; and $y=-f(\x)$ otherwise. 
Given i.i.d.\ samples from $\D$, the goal of the learner is to output a hypothesis $h: \mathcal X \to \{\pm 1\}$ such that
with high probability the 0-1 error
$\err_{\D}(h) \eqdef \pr_{(\bx, y) \sim \D}[h(\bx) \neq y]$ is small.
\end{definition}

The concept class of halfspaces with a margin is defined as follows.

\begin{definition}[$\gamma$-Margin Halfspaces]\label{assmpt:margin+Massart}
Let $\D$ be a distribution over $\mathbb{S}^{d-1}\times\{\pm 1\}$, 
where $\mathbb{S}^{d-1}$ is the unit sphere in $\R^d$. 
Let $\wstar\in\mathbb{S}^{d-1}$ and $\gamma\in(0, 1)$. 
We say that the distribution $\D$ satisfies the $\gamma$-margin condition 
with respect the halfspace $\sign(\wstar\cdot\x)$\footnote{We will henceforth assume that the threshold is $\theta = 0$, 
which is well-known to be no loss of generality.},  
if (i) for $(\x,y) \sim D$, we have that $y = \sign(\wstar\cdot\x)$, and 
(ii) $\pr_{(\x,y) \sim \D} \left[ |\wstar \cdot \x| < \gamma \right] = 0$. 
The parameter $\gamma$ is called the margin of the halfspace $\sign(\wstar\cdot\x)$. 
\end{definition}

Information-theoretically, the best possible 0-1 error attainable for learning a concept class
with Massart noise is $\opt: = \E_{\x \sim \D_{\x}}[\eta(\x)]$. Since $\eta(\x)$ is uniformly bounded above by $\eta$, it follows that $\opt \leq \eta$; also note that it may well be the case that $\opt \ll \eta$. 
Focusing on the class of $\gamma$-margin halfspaces, 
it follows from~\cite{Massart2006} that there exists a (computationally inefficient) estimator achieving
error $\opt+\eps$ with sample complexity $\widetilde{O}(1/((1-2\eta)\gamma^2 \eps))$; and moreover that 
this sample upper bound is nearly best possible (within a logarithmic factor) for any estimator. 
(That is, the sample complexity of the Massart learning problem is essentially the 
same as in the realizable case, as long as $\eta$ is bounded from $1/2$.) 

Taking computational considerations into account, the feasibility landscape of the problem changes.
Prior work~\cite{DK20-SQ-Massart,NasserT22,DKPR22}
has provided strong evidence that 
achieving error better than $\eta+\eps$ is not possible in polynomial time. Consequently, algorithmic research 
has been focusing on achieving the qualitatively weaker error guarantee of $\eta+\eps$. We note that efficiently obtaining any non-trivial  guarantee had remained open since the 80s; see \Cref{app:related} for a discussion. The first algorithmic 
progress for this problem is due to~\cite{DGT19}, who gave a 
polynomial-time algorithm achieving error 
of $\eta+\eps$ with sample complexity $\poly(1/\gamma, 1/\eps)$. 
Subsequent work~\cite{CKMY20} gave an efficient 
algorithm with improved sample complexity of $\tilde{O}(1/(\gamma^4 \eps^3))$. 
Prior to the current work, 
this remained the best known sample upper bound for efficient algorithms. 

In summary, known computationally efficient algorithms for learning margin 
halfspaces with Massart noise require significantly more samples---namely, 
$\tilde{\Omega}(1/(\gamma^4 \eps^3))$---than the information-theoretic 
minimum of $\widetilde{\Theta}_{\eta}(1/(\gamma^2 \eps))$. 
It is thus natural to ask whether a polynomial-time algorithm with 
optimal (or near-optimal, i.e., within logarithmic factors) sample complexity exists. Recall that the answer to this 
question is affirmative in the realizable setting, where the Perceptron algorithm is optimal. Perhaps 
surprisingly, recent work~\cite{DDKWZ23} (see also~\cite{DDKWZ23b}) gave evidence for the existence of inherent 
{\em information-computation tradeoffs} in the Massart noise model---in fact, even in the simpler model of Random 
Classification Noise (RCN)~\cite{AL88}\footnote{The RCN model is the special case of Massart noise, where 
$\eta(\x) = \eta$ for all points $\x$ in the domain.}. 
Specifically, they showed that any efficient 
Statistical Query (SQ) algorithm or low-degree polynomial tasks requires 
$\Omega(1/\eps^2)$ samples---a near quadratic blow-up 
compared to the $\tilde{O}(1/\eps)$ information-theoretic upper bound.
This discussion serves as the motivation for the following question:
\begin{center}
{\em What is the optimal {\em computational sample complexity} of the problem of \\ 
learning $\gamma$-margin halfspaces with Massart noise?}
\end{center}
By the term ``computational sample complexity'' above, we mean the sample complexity of
polynomial-time algorithms for the problem. 
Given the fundamental nature of this learning problem, we believe that a fine-grained 
sample complexity versus computational complexity analysis is interesting on its own merits.
{\em In this work, we develop a computationally efficient algorithm with 
sample complexity of $\tilde{O}(1/(\gamma^2 \eps^2))$.} Given the aforementioned 
information-computation tradeoffs, there is evidence that this upper bound is 
close to best possible. As a bonus, our algorithm is also simple and 
practical, relying on online SGD. (In fact, our algorithm runs in sample 
linear time, excluding a final testing step that slightly increases the runtime.)

\subsection{Our Result and Techniques} \label{ssec:res}

Our main result is the following:

\begin{theorem}[Main Result, Informal]\label{thm:main-inf} 
Let $D$ be a distribution on $\mathbb{S}^{d-1} \times \{\pm 1\}$ that satisfies 
the $\eta$-Massart noise condition with respect to an unknown $\gamma$-margin 
halfspace $f(\x) = \sign(\wstar\cdot\x)$. There is algorithm that draws 
$n = \tilde{O} (1/(\eps^2 \gamma^2))$ samples from $D$, runs in time $\tilde{O}(dn/\eps)$,
and with probability at least $9/10$ returns a vector $\hat{\vec w}$ such that 
$\mathrm{err}_{\D}(\hat{\vec w})\leq \eta+\eps$. 
\end{theorem}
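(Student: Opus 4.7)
The plan is to design a family of convex surrogate losses whose approximate minimizers, restricted to the unit ball, correspond to halfspaces achieving error at most $\eta+\eps$, and then to run projected online SGD on one such loss, consuming one fresh i.i.d.\ sample per step; the one-pass nature is what lets us achieve near-optimal sample complexity.

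Concretely, for a parameter $\lambda \in (0,1/2)$ I would use a LeakyReLU-type loss $\ell_\lambda(\vec w;(\vec x,y)) = \LR_\lambda(-y\, \vec w\cdot \vec x)$, where $\LR_\lambda(z)=\max(\lambda z,\,(1-\lambda)z)$. The central structural claim I would prove is: if $\lambda$ is chosen so that $\lambda \approx (1-2\eta)/2$ (i.e., calibrated to the Massart bound), then any unit vector $\hat{\vec w}$ whose expected loss exceeds $\E_{\D}[\ell_\lambda(\wstar)]$ by at most $O(\eps\gamma)$ satisfies $\mathrm{err}_{\D}(\sgn(\hat{\vec w}\cdot\vec x))\le \eta+\eps$. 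The intuition is that on any point $\vec x$ where $\hat{\vec w}$ disagrees with $\wstar$, the margin assumption forces $|\wstar\cdot\vec x|\ge \gamma$, so a Massart adversary can increase the loss of $\wstar$ only at a rate that is strictly dominated by $\lambda$ times the corresponding loss reduction on correctly labeled points — provided $\lambda$ is below the calibration threshold tied to $\eta$. Integrating these contributions across the disagreement region yields an $\Omega(\eps\gamma)$ loss gap per unit of classification error, which is the lemma we need.

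Given this structural guarantee, the algorithmic step is standard: since $\ell_\lambda$ is convex and $O(1)$-Lipschitz in $\vec w$ (as $\|\vec x\|=1$), projected online SGD on the unit ball for $T = \tilde O(1/(\eps^2\gamma^2))$ steps produces an average iterate whose expected excess loss is $O(1/\sqrt{T}) = O(\eps\gamma)$, matching the threshold in the structural lemma. Because $\eta$ is unknown, I would run the procedure on a geometric grid of $O(\log(1/\eps))$ candidate $\lambda$ values, producing $O(\log(1/\eps))$ candidate vectors, then select the empirical 0-1 minimizer using a fresh validation set of size $\tilde O(1/\eps^2)$; this selection step is what produces the extra $1/\eps$ factor in the stated runtime $\tilde O(dn/\eps)$. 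Standard Hoeffding plus a union bound over the grid handles the selection with failure probability at most $1/10$.

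The main technical obstacle is establishing the structural lemma quantitatively — carefully balancing the $\lambda$ parameter so that the Massart adversary's worst-case inflation of $\E[\ell_\lambda(\wstar)]$ is provably smaller than the loss savings that any competing $\hat{\vec w}$ would accrue per unit of classification error, and showing that this gap scales as $\eps\gamma$ rather than $\eps\gamma^2$ or worse. This is exactly where the improvement over prior work's $\tilde O(1/(\gamma^4\eps^3))$ sample bound must come from: matching the SGD convergence rate of $1/\sqrt{T}$ to a tight $\eps\gamma$ loss-to-error conversion, rather than the looser $\eps\gamma^2$ or $\eps^{3/2}\gamma^2$ conversions implicit in earlier analyses. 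The remaining ingredients — SGD regret, the grid over $\lambda$, projection onto the unit ball, and analysis of the validation step — are standard and I would only sketch them.
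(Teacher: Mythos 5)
Your proposal hinges on a structural lemma that is false: for the \emph{plain} LeakyReLU surrogate $\E_{\D}[\LR_\lambda(-y\,\vec w\cdot\x)]$, a small excess loss over $\wstar$ does not imply small 0-1 error, no matter how $\lambda$ is calibrated. The expected surrogate equals $\E_{\x}[(\mathrm{err}(\w,\x)-\lambda)\,|\w\cdot\x|]$, so disagreements are weighted by the distance to the \emph{candidate's} hyperplane, while the margin assumption only lower-bounds $|\wstar\cdot\x|$. An adversary can therefore place a constant fraction of the mass at points with $|\wstar\cdot\x|=\gamma$ but $|\hat{\vec w}\cdot\x|$ arbitrarily small (take $d=3$, $\wstar=\e_1$, $\hat{\vec w}=\e_2$, points of the form $(\gamma,\pm\alpha,\beta)$ with $\alpha\to 0$), set $\eta(\x)=0$ there, and put the remaining mass on correctly classified points with $|\hat{\vec w}\cdot\x|\approx 1$ and $\eta(\x)=0$. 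Then $\hat{\vec w}$ has surrogate loss about $-\lambda(1-p)$, strictly below $\wstar$'s loss of $-\lambda\gamma$ once $1-p>\gamma$, yet 0-1 error $p$ which can be any constant; this happens even under RCN-type noise and for every fixed $\lambda\in(0,1)$, so no grid over $\lambda$ rescues it. This is exactly the known obstruction (cf.\ \cite{DGT19}) showing that minimizing a single fixed convex surrogate cannot achieve $\eta+\eps$ distribution-free, and it is why your claimed $\Omega(\eps\gamma)$ loss-to-error conversion cannot hold; the rest of your argument (SGD regret, grid, validation) is standard but builds on this broken foundation. Incidentally, your calibration $\lambda\approx(1-2\eta)/2$ is also not the right threshold — the relevant comparison is $\lambda\geq\eta(\x)$, i.e., $\lambda=\eta$ — but that is secondary.

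The paper's fix is precisely to repair this mismatch between the surrogate's weighting and the 0-1 error: it divides the LeakyReLU loss by $\max(|\vec v\cdot\x|,\gamma/2)$ where $\vec v$ is frozen to the \emph{current iterate} $\vec w^t$, so each round's loss is still convex in $\w$ but, at $\vec v=\w$, is (up to the clipping at $\gamma/2$, harmless because $|\wstar\cdot\x|\geq\gamma$) a shift of the true 0-1 error. The analysis is then not a regret bound for a fixed loss: the Structural Lemma shows the clean part of the stochastic gradient satisfies $\vec g^1(\w,\x)\cdot(\w-\wstar)\geq 2(\mathrm{err}(\w,\x)-\eta)$ pointwise, so the gradients act as subgradients of the potential $\|\w-\wstar\|_2^2$; one-sample-per-step projected SGD with step size $\Theta(\gamma^2\eps)$ then makes progress whenever $\err_{\D}(\w^t)\geq\eta+\eps$, a martingale argument controls the noise over $T=\tilde\Theta(1/(\eps^2\gamma^2))$ rounds, and a final empirical-0-1 selection over the $T+1$ iterates (needing only $\tilde O(1/\eps)$ fresh samples, not $\tilde O(1/\eps^2)$) returns the good iterate, which also accounts for the $\tilde O(dn/\eps)$ runtime. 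If you want to salvage your plan, the missing idea is this iterate-dependent reweighting (a sequence of convex losses simulating the 0-1 loss at the current iterate), not a better choice of $\lambda$.
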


The sample upper bound of \Cref{thm:main-inf} nearly matches the computational 
sample complexity of the problem (for SQ algorithms and low-degree 
polynomial tests), which was shown to be $\Omega(1/(\eps^2\gamma)+ 
1/(\eps\gamma^2))$~\cite{Massart2006, DDKWZ23, DDKWZ23b}.
That is, \Cref{thm:main-inf} comes close to resolving 
the fine-grained complexity of this basic task. 
Moreover, it matches known algorithmic guarantees for the easier case of Random Classification Noise~\cite{DDKWZ23, KITBMV23}.

\paragraph{Independent Work }
Independent work \cite{Kontonis2024} obtained a learning algorithm for 
$\gamma$-margin halfspaces with essentially the same sample and computational complexity as ours.

\paragraph{Brief Overview of Techniques} 
Here we provide a brief summary of our approach in tandem with a comparison to prior work.
The algorithm of~\cite{DGT19} adaptively partitions the space into polyhedral regions and 
uses a different linear classifier in each region, 
each achieving error $\eta+\eps$ within the corresponding region. 
Their approach leverages the LeakyReLU loss
(see \eqref{eq:old-loss}) as a convex proxy to the 0-1 loss. 
At a high-level, their approach reweights the samples in order 
to accurately classify a non-trivial fraction of points. 
\cite{CKMY20} uses the LeakyReLU loss to efficiently identify 
a region where the value of the loss conditioned on this region is 
sub-optimal; they then use this procedure as a separation oracle 
along with online convex optimization (see also~\cite{DKTZ20b, DKKTZ21b}) to output 
a linear classifier with 0-1 error at most $\eta+\eps$. 
Both of these approaches inherently require $\Omega(1/\eps^3)$ samples for the following reason: 
they both need to condition on a region where the probability mass of the distribution can be as small as $\Theta(\eps)$. Thus, even estimating the error 
of the loss would require at least $\Omega(1/\eps^2)$ conditional samples.  
Beyond the dependence on $1/\eps$, the sample complexity achieved in these prior works 
is also suboptimal in the margin parameter $\gamma$; namely, $\Omega(1/\gamma^4)$. 
This dependence follows from the facts that both of these works require 
estimating the loss in each iteration within error of at most 
$\gamma\eps$, and that their algorithmic approaches require $\Omega(1/\gamma^2)$ iterations. 

To circumvent these issues, novel ideas are required. 
At a high-level, we design a uniform approach to decrease the ``global'' 
error, as opposed to the local error (as was done in prior work). 
Specifically, we construct a different sequence of convex loss functions, 
each of which attempts to accurately simulate the 0-1 objective. 
We note that a similar sequence of loss functions was used 
in the recent work~\cite{DKTZonline} in a related, but significantly different, 
adversarial online setting. Interestingly, a similar reweighting scheme 
was used in~\cite{CKMY20} for learning general Massart halfspaces.  
Beyond this similarity, these works have no implications for the sample
complexity of our problem. (See \Cref{ssec:related} for a detailed comparison.) 
Via this approach, we obtain an iterative algorithm which uses 
only $O_{\gamma}(1/\eps^2)$ samples in order to estimate the loss in each iterative step. 

In more detail, note that the 0-1 loss can be written in the form
$-\E[y \frac{\vec w \cdot \x}{|\vec w\cdot \x|} ]$. 
We convexify this objective by considering, in each step, the loss  
$\ell(\vec w,\vec u)=-\E[y \frac{\vec w \cdot \x}{|\vec u\cdot \x|} ]$, 
where $\vec u$ is independent of $\vec w$; 
this loss is convex with respect to $\vec w$. 
Observe that $\ell(\vec w,\vec w)$ is proportional to the zero-one loss 
of $\vec w$. Unfortunately, it is possible that no optimal 
vector $\vec w^*$ (under 0-1 loss) 
minimizes $\ell(\vec w^*,\vec w)$. 
For this reason, we consider the objective 
$\ell_{\eta}(\vec w,\vec u)=
\E[(\1\{y\neq \sign(\vec w\cdot\x)\}-\eta-\eps) |\vec w \cdot \x|/|\vec u\cdot \x|]$. 
This new objective satisfies the following: 
$\ell_\eta(\w^*,\vec u)<-\eps\gamma$ for any vector $\vec u$ 
and any $\wstar$ that minimizes the 
0-1 objective; and $\ell_\eta(\vec w,\vec w)\geq \eps$ 
as long as $\vec w$ incurs 0-1 error at least $\eta+\eps$. 
By the convexity of $\ell_\eta(\vec w,\vec u)$, 
this allows us to construct a separation oracle.  Namely, we draw enough samples so that 
$\wh{\ell}_\eta(\vec w,\vec w)-\wh{\ell}_\eta(\wstar,\vec w)\geq \eps/2$, where 
$\wh{\ell}$ is the emprical version of the loss.  Due to the nature of these objectives, 
$O_{\gamma}(1/\eps^2)$ samples per iteration suffice for this purpose. 
This in turn implies that the cutting planes method efficiently finds 
a near-optimal weight vector after $O(\log(1/\eps)/\gamma^2)$ iterations. 
Overall, this approach leads to an efficient algorithm with 
sample complexity $\tilde{O}_{\gamma}(1/\eps^2)$. To get the desired
sample complexity of $\tilde{O}(1/(\eps^2\gamma^2))$, more ideas are needed.

In the previous paragraph, we hid an obstacle 
that makes the above approach fail. Specifically, 
it may be possible that, for many points $\x$, 
the value of $|\vec u\cdot\x|$ is arbitrarily small. 
To fix this issue, we consider a clipped reweighting as follows: 
$\ell'_{\eta}(\vec w,\vec u)=\E[(\1\{y\neq \sign(\vec w\cdot\x)\}-\eta-\eps) \frac{|\vec w \cdot \x|}{\max(|\vec u\cdot \x|,\gamma)} ]$. This clipping step is not a problem for us,
because the target halfspace $\sgn(\vec w ^{\ast} \cdot \x)$ was assumed to have margin $\gamma$. 
This guarantees that the difference between the expected (over $y$) 
pointwise losses at $(\vec w,\vec w)$ and $(\wstar,\vec w)$ 
is at least $\eps$ on the points $\x$ where $|\vec u\cdot\x|\leq \gamma$. 
Indeed, when this is the case, then $|\wstar\cdot\x|/|\vec u\cdot\x|\geq 1$. 
Overall, this suffices to guarantee that 
${\ell}'_\eta(\vec w,\vec w)-{\ell}'_\eta(\wstar,\vec w)\geq \eps$.

\subsection{Notation}\label{ssec:prelims}

For $n \in \Z_+$, let $[n] \eqdef \{1, \ldots, n\}$.
We use small
boldface characters for vectors.
For $\bx \in \R^d$ and $i \in [d]$, $\bx_i$
denotes the $i$-th coordinate of $\bx$, and $\|\bx\|_2 \eqdef (\littlesum_{i=1}^d \bx_i^2)^{1/2}$ denotes the $\ell_2$-norm of $\bx$.
We will use $\bx \cdot \by $ for the inner product of $\bx, \by \in \R^d$. For a subset $S\subseteq \R^d$, we define the $\proj_{S}$ operator that maps a point $\x\in \R^d$ to the closest point in the set $S$. 
For $a,b\in \R$, we denote $W(a,b)\eqdef 1/\max(a,b)$.
We will use $\1_A$ to denote the characteristic function of the set $A$,
i.e., $\1\{\x \in A\}= 1$ if $\x\in A$, and $\1\{\x \in A\}= 0$ if $\x\notin A$.
For $A,B\in \R$, we write $A\gtrsim B$ (resp. $A\lesssim B$) to denote that there exists a universal constant $C>0$, such that $A\geq C B$ (resp. $A\leq C B$).

We use $\E_{x\sim \D}[x]$ for the expectation of the random variable $x$ with respect to the distribution $\D$ and
$\pr[\mathcal{E}]$ for the probability of event $\mathcal{E}$. 
For simplicity, we may
omit the distribution when it is clear from the context. 
For $(\x,y) \sim \D$, we use $\D_\x$ for the marginal distribution of $\x$ 
and $\D_y(\x)$ for the distribution of $y$ conditioned on $\x$. 
We use $\widehat D_N$ to denote the empirical distribution obtained by drawing 
$N$ i.i.d.\ samples from $D$. 
We use $\mathrm{err}_D(\vec w)$ to denote 
the 0-1 error of the halfspace defined by the weight vector $\vec w$ 
with respect to the distribution $D$, 
i.e., $\mathrm{err}_D(\vec w)\eqdef \pr_{(\x,y)\sim D}[\sign(\vec w\cdot\x)\neq y]$. 
We will use $\mathrm{err}(\w,\x)$ for the 0-1 error of $\sign(\vec w\cdot\x)$ 
conditioned on $\x$, 
i.e., $\mathrm{err}(\w,\x):=\pr_{y\sim D_y(\x)}[\sign(\vec w\cdot\x)\neq y]$. 
Note that $\mathrm{err}_D(\vec w)=\Exx[\mathrm{err}(\w,\x)]$. If $D$ satisfies
the $\eta$-Massart noise condition with respect to the halfspace 
$\sign(\vec w\cdot\x)$, 
then $\mathrm{err}(\w,\x)= \eta(\x) \1\{\sign(\w\cdot\x)=\sign(\wstar\cdot\x)\}
+ ( 1-\eta(\x)) \1\{\sign(\w\cdot\x)\neq\sign(\wstar\cdot\x)\} \;.$

\section{Our Algorithm and its Analysis: Proof of~\Cref{thm:main-inf}} \label{sec:main}
In this section, we prove our main result. 
\Cref{alg:massart} efficiently learns the class of margin halfspaces 
on the unit ball, in the presence of Massart noise, 
with sample complexity nearly matching the information-computation limit.
Additionally, its runtime is linear in the sample size, 
excluding a final testing step to select the best hypothesis.

At a high-level,  
our algorithm leverages a carefully selected convex loss 
(or, more precisely, a sequence of convex losses) --- 
serving as a proxy to the 0-1 error. 
A common loss function, introduced in this context by~\cite{DGT19} 
and leveraged in~\cite{DGT19, CKMY20}, 
is the LeakyReLU function. This is the univariate function  $\mathrm{LeakyReLU}_\lambda(t)=(1-\lambda)\1\{t\geq 0\}t+\lambda\1\{t<0\}t$, 
where $\lambda\in(0,1)$ is the leakage parameter (that needs to be selected carefully). 
Roughly speaking, the convex function 
$\ell_{\lambda}(\vec w,\x,y)=\mathrm{LeakyReLU}_\lambda(-y (\vec w\cdot\x) )$ can be viewed as a reasonable proxy to the 0-1 loss of the halfspace $\sign(\vec w\cdot\x)$
on the point $(\x,y)$. To see this, note that (see, e.g., \Cref{clm:dgt-claim})
\begin{equation} 
    \ell_\lambda(\vec w,\x,y)=(\1\{\sign(\vec w\cdot\x)\neq y\}-\lambda)|\vec w\cdot \x|\;. \label{eq:old-loss}
\end{equation}
Observe that a point $\x$ that is classified correctly by the halfspace 
$\sign(\vec w \cdot \x)$ 
will satisfy
\[\big(\mathbf{E}_{y\sim D_y(\x)}[\1\{\sign(\vec w\cdot\x)\neq y\}]-\lambda\big)|\vec w\cdot \x|=(\eta(\x)-\lambda)|\vec w\cdot \x| \]
which is non-positive for $\lambda\geq \eta(\x)$. 
Since the only guarantee we have is that $\eta(\x) \leq \eta$, this suggests that we need to select $\lambda \geq \eta$. It turns out that $\lambda: = \eta$ is the optimal choice. 
We fix the choice of $\lambda: = \eta$ throughout. 
On the other hand, if (the halfspace defined by) $\vec w$ 
misclassifies the point $\x$, this term becomes non-negative.

The factor $|\vec w\cdot\x|$ in \Cref{eq:old-loss}  
reweights the 0-1 error 
so that points $\x$ for which $|\vec w\cdot\x|$ is sufficiently large
(i.e., close to $1$) have to be classified correctly 
by a minimizer of $\E_{(\x,y) \sim D}[\ell_\lambda(\vec w,\x,y)]$. 
On the other hand, points closer to the separating hyperplane defined by $\vec w$, 
or points where $\eta(\x)$ is close to $\lambda = \eta$, are not guaranteed 
to be classified correctly by the minimizer of this loss. 
We leverage this insight to construct a sequence of loss functions 
that reweight the points so that, to minimize the regret, 
we need to classify a large fraction of points; this leads to the desired
error of $\eta+\eps$ with near-optimal sample complexity.

We now provide some intuition justifying our choice of surrogate loss functions.
Observe that if we instead could minimize the function
\begin{equation}
  \Ey[\ell_\lambda(\vec w,\x,y)/|\vec w\cdot \x|]=\Ey[(\1\{\sign(\vec w\cdot\x)\neq y\}-\lambda)] \;, \label{eq:loss-and-01}
\end{equation}
with respect to $\vec w$, we would obtain a halfspace with 
minimum 0-1 error; 
unfortunately, this reweighted loss is just a shift of the 0-1 loss, hence 
non-convex. To fix this issue, instead of reweighting by $1/|\vec w\cdot\x|$, we will reweight by $W(\vec v\cdot\x,\gamma)\eqdef 1/\max(|\vec v\cdot\x|,\gamma)$, 
where $\gamma$ is the margin parameter and $\vec v$ is an appropriately chosen vector that is independent of $\vec w$. The new loss is defined as follows:  
\begin{equation}
    \mathcal L_{\lambda,\vec v}(\vec w)\eqdef \Ey[\ell_{\lambda}(\vec w,\x,y)W(\vec v\cdot\x,\gamma/2)]\;,\label{eq:loss}
\end{equation}
where for technical reasons we use $\gamma/2$ instead of $\gamma$ in the maximum.

Since the parameter $\vec v$ is independent of $\vec w$, 
the loss $\mathcal L_{\lambda,\vec v}(\vec w)$ remains convex in $\vec w$. 
At the same time, by carefully choosing $\vec v$, we can accurately simulate
the non-convex 0-1 loss. Note that our reweighting term is a maximum over two terms.  
The reason for this choice is that, for some points $\x$, the quantity $|\vec v\cdot\x|$ 
can be arbitrarily small; taking the maximum avoids the loss becoming very large. In 
particular, the loss $\mathcal L_{\lambda,\vec v}(\vec w)$  
will be guaranteed to remain in a bounded length interval. 

Our algorithm proceeds in a sequence of iterations. 
In the $(t+1)$-st iteration, it sets $\vec v$ to be $\vec w^t$, 
where $\vec w^t$ is the weight vector of step $t$. This choice attempts to
simulate the 0-1 error at $\vec w^t$, 
as is suggested by \Cref{eq:loss-and-01}. 
Assume for simplicity that our current hypothesis is the halfspace defined by $\vec w$ 
and is such that $\Exx[\1\{|\vec w\cdot\x|\leq \gamma/2\}]=0$. 
Note this implies that $W(\vec w\cdot\x,\gamma/2)=1/|\vec w\cdot \x|$. 
By combining \Cref{eq:loss,eq:loss-and-01}, we get that 
$\mathcal L_{\lambda,\vec w}(\vec w)=\err_{\D}(\vec w)-\lambda$; 
note that as long as $\err_{\D}(\vec w)\geq \lambda+\eps$, 
we have that $\mathcal L_{\lambda,\vec w}(\vec w)\geq \eps$. 
On the other hand, the optimal halfspace $\wstar$ achieves a non-positive loss; 
from \Cref{eq:old-loss,eq:loss-and-01}, we have that
\begin{align*}
    \mathcal L_{\lambda,\vec w}(\wstar)&=
\Ey[(\1\{\sign(\wstar\cdot\x)\neq y\}-\lambda)|\wstar\cdot\x|W(\vec w\cdot\x,\gamma/2)]
\\    &=\Exx[(\eta(\x)-\lambda)|\wstar\cdot\x|W(\vec w\cdot\x,\gamma/2)]\leq 0\;,
\end{align*} 
where the inequality follows from the fact that $\eta(\x)\leq \eta$. 
Recalling that $\mathcal L_{\lambda,\vec v}(\vec w)$ is convex, 
if we run an Online Convex Optimization (OCO) algorithm, 
after $T$ steps we are guaranteed to find a vector $\vec w$ such that 
$\mathcal L_{\lambda,\vec w}(\vec w)-\mathcal L_{\lambda,\vec w}(\wstar)\leq O(1/\sqrt{T})$. 
For $T=O(1/\eps^2)$, this gives that $\mathcal L_{\lambda,\vec w}(\vec w)<\eps/2$;  
and therefore we would have $\err_D(\vec w)<\lambda+\eps$. We provide an approach using this idea and the cutting planes algorithm in \Cref{ssec:online} that achieves sample complexity $\widetilde O(1/(\eps^2\gamma^4))$.

Our algorithm and its analysis work only with the gradient of 
$\mathcal L_{\lambda,\vec v}(\vec w)$. The key novelty is the analysis of the sample complexity. 
The gradient of $\ell_\lambda(\w,\x,y)W(\vec v\cdot\x,\gamma)$ 
with respect to $\vec w$
has the following explicit form:
\[ \vec g_{\lambda,\gamma}(\vec w,\vec v,\x,y)\eqdef 
((1-2\lambda)\sign(\vec w\cdot \x)-y)W(\vec v\cdot\x,\gamma)\x \,
=\frac{\left((1-2\lambda)\sign(\vec w\cdot \x)-y\right)}{\max(|\vec v\cdot \x|,\gamma)}\x \,.\] 
Furthermore, we denote by $\vec G_{\D}(\vec w,\vec v,\eta,\gamma)=\Ey[\vec g_{\eta,\gamma}(\vec w,\vec v,\x,y)]$.

Before describing our algorithm and proving \Cref{thm:main-detailed}, 
we simplify our notation. 
We will omit the parameters $\eta,\gamma$ 
from the function input (as they are fixed throughout).
Therefore, we use 
$\vec G_{\widehat D_N^t}(\vec w,\vec v)\equiv \vec G_{\widehat D_N^t}(\vec w,\vec v,\eta,\gamma)$ 
and 
$\vec g(\vec w,\vec v,\x,y)\equiv \vec g_{\eta,\gamma/2}(\vec w,\vec v,\x,y)$.

Our algorithm is described in pseudocode below.

\begin{Ualgorithm}[h]
	\centering
	\fbox{\parbox{5.7in}{
			{\bf Input:} Sample access to a distribution $D$ supported in $\mathbb{S}^{d-1}\times \{\pm 1\}$ corrupted with $\eta$-Massart noise with respect to a halfspace $\sign(\wstar\cdot\x)$ that satisfies the $\gamma$-margin condition; parameters $\eps,\delta\in(0,1)$, and $N,T\in \Z_+$.\\
			{\bf Output:} Weight vector $\hat{\vec w}$ such that $\mathrm{err}_{\D}(\hat{\vec w})\leq \eta+\eps$ with probability at least $1-\delta$.
			\begin{enumerate}
   \item Let $c>0$ be a sufficiently small universal constant. 
   \item $t\gets 0$, $\vec w^0 \gets \vec e_1 = (1, 0, \ldots, 0)$, and $T=(1/c)\log(1/\delta)/(\eps^2\gamma^2)$.
				\item While  $t\leq T$ do
				\begin{enumerate}
    				\item Draw $(\vec x^{(t)},y^{(t)})$ sample from $D$.
\item Set $\lambda_t \gets c\gamma^2\eps$.
					\item Update $\vec w^t$ as follows:\label{alg:estimate3} \qquad  \qquad\quad$\triangleright$ Update and project in the unit ball\begin{align*}
					    \vec v^{t+1}&\gets\vec w^{t}-\lambda_t \vec g(\vec w^t,\vec w^t,\x^{(t)},y^{(t)})
        \quad \quad\vec w^{t+1}\gets\frac{\vec v^{t+1}}{\max(\|\vec v^{t+1}\|_2,1)}
					\end{align*}
					 \label{alg:update}
					\item $t\gets t+1$. 
		\end{enumerate}
			\item Draw $N$ samples from $D$ and construct the empirical distribution $\widehat D_N$.
   \item Return $\widehat{\vec w}=\argmin_{t\in[T+1]}\err_{\widehat D_N}(\w^t)$. \label{alg:tournament}
	\end{enumerate}}}

    \vspace{0.3cm}
    
	\caption{Learning Margin Halfspaces with Massart Noise} \label{alg:massart}
\end{Ualgorithm}

\Cref{alg:massart} employs online SGD applied to a sequence of convex loss functions.
We show that, after a certain number of iterations, the algorithm will find a weight vector achieving 0-1 error at most $\eta+\eps$. Since the desired vector may not be the last iterate, 
in the end, our algorithm returns the halfspace that achieves the smallest empirical 0-1 error.

We establish the following result, which implies \Cref{thm:main-inf}.

\begin{theorem}[Main Result]\label{thm:main-detailed} 
Let $D$ be a distribution on $\mathbb{S}^{d-1} \times \{\pm 1\}$ satisfying
the $\eta$-Massart noise condition with respect to the $\gamma$-margin 
halfspace $f(\x) = \sign(\wstar\cdot\x)$. 
Given $N=\Theta(\log(1/(\gamma\delta))/\eps(1-2\eta))$ and $T=\Theta(\log(1/\delta)/(\eps^2\gamma^2))$,
\Cref{alg:massart} returns a vector $\hat{\vec w}$ such that 
$\mathrm{err}_{\D}(\hat{\vec w})\leq \eta+\eps$ with probability at least $1-\delta$. 
The algorithm draws $n=O(N+T)$ samples from $D$ 
and runs in $O(d N T)$ time.
\end{theorem}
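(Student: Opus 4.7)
The plan is to interpret \Cref{alg:massart} as projected online stochastic gradient descent (OSGD) applied to the sequence of convex surrogate losses $\mathcal{L}_{\eta,\vec w^t}(\cdot)$, and organize the analysis into three pieces: an OSGD regret bound, a pointwise separation lemma relating these surrogate losses at $\vec w^t$ and at $\wstar$ to $\mathrm{err}_D(\vec w^t)$, and a final empirical tournament to extract a single good hypothesis from the produced iterates.

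First, I would verify the OSGD hypotheses. For each fixed $\vec v$, $\vec w\mapsto\mathcal{L}_{\eta,\vec v}(\vec w)$ is convex, since $\ell_\eta(\vec w,\x,y)$ is convex in $\vec w$ and $W(\vec v\cdot\x,\gamma/2)$ is a nonnegative scalar not depending on $\vec w$. The vector $\vec g(\vec w^t,\vec w^t,\x^{(t)},y^{(t)})$ is an unbiased stochastic subgradient of $\mathcal{L}_{\eta,\vec w^t}$ at $\vec w^t$, with uniform norm bound $\|\vec g\|_2\le 4/\gamma$ (using $|(1-2\eta)\sign(\vec w\cdot\x)-y|\le 2$, $\|\x\|_2=1$, and $\max(|\vec w\cdot\x|,\gamma/2)\ge\gamma/2$). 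Applying the standard projected OSGD regret bound on the unit ball with stepsize $\lambda_t=c\gamma^2\eps$ and $T=\Theta(\log(1/\delta)/(\eps^2\gamma^2))$, and using Azuma--Hoeffding to convert the expected regret into a high-probability bound, gives that with probability at least $1-\delta/2$,
\[
\frac{1}{T+1}\sum_{t=0}^{T}\bigl(\mathcal{L}_{\eta,\vec w^t}(\vec w^t)-\mathcal{L}_{\eta,\vec w^t}(\wstar)\bigr)\le \eps/3.
\]

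The crux and main obstacle is the separation lemma: for every $\vec w\in\mathbb{S}^{d-1}$ with $\mathrm{err}_D(\vec w)\ge\eta+\eps$, the gap $\mathcal{L}_{\eta,\vec w}(\vec w)-\mathcal{L}_{\eta,\vec w}(\wstar)\ge\Omega(\eps)$. The $\wstar$ side is immediate: $\eta(\x)\le\eta$ implies the pointwise expected loss $(\eta(\x)-\eta)|\wstar\cdot\x|W(\vec w\cdot\x,\gamma/2)\le 0$, hence $\mathcal{L}_{\eta,\vec w}(\wstar)\le 0$. The $\vec w$ side requires a careful case split: on the far region $A=\{|\vec w\cdot\x|\ge\gamma/2\}$, the reweighting satisfies $|\vec w\cdot\x|W(\vec w\cdot\x,\gamma/2)=1$ and the $\vec w$-side integrand reduces to $\mathrm{err}(\vec w,\x)-\eta$, matching the idealized calculation of \Cref{eq:loss-and-01}; on the near region $A^c$, the $\vec w$-side contribution is attenuated by the factor $|\vec w\cdot\x|/(\gamma/2)<1$, but the margin hypothesis $|\wstar\cdot\x|\ge\gamma$ forces $|\wstar\cdot\x|W\ge 2$, which amplifies $-\mathcal{L}_{\eta,\vec w}(\wstar)$ precisely where the $\vec w$ side weakens. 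A pointwise case analysis on the disagreement set $M=\{\sign(\vec w\cdot\x)\ne\sign(\wstar\cdot\x)\}$, combined with the lower bound $\pr[M]\ge\eps/(1-2\eta)$ implied by $\mathrm{err}_D(\vec w)\ge\eta+\eps$, yields the desired $\Omega(\eps)$ gap; the clipping threshold $\gamma/2$ (as opposed to $\gamma$) is exactly what buys the factor-$2$ amplification $|\wstar\cdot\x|W\ge 2$ on $A^c$ and makes the regions' contributions compatible.

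Combining the two pieces, if every $\vec w^t$ had $\mathrm{err}_D(\vec w^t)\ge\eta+\eps$ the average regret would be $\Omega(\eps)$, contradicting the displayed bound; hence at least one iterate $\vec w^{t^*}$ satisfies $\mathrm{err}_D(\vec w^{t^*})<\eta+\eps$. The algorithm then selects the best iterate by empirical-error minimization on $N$ fresh samples. To avoid the na\"ive $1/\eps^2$ sample rate, I would exploit the variance-reducing bound $\mathrm{err}_D(\vec w)-\opt\ge(1-2\eta)\pr[M(\vec w)]$: together with multiplicative Chernoff and a union bound over the $T+1$ candidates, this shows that $N=\Theta(\log(T/\delta)/(\eps(1-2\eta)))$ samples suffice to identify a $\vec w^t$ with $\mathrm{err}_D(\vec w^t)\le\eta+\eps$ with probability $\ge 1-\delta/2$. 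A final union bound with the OSGD event yields overall success probability $\ge 1-\delta$; the sample count $O(T+N)$ and runtime $O(dNT)$ follow by bookkeeping (the online phase runs in $O(d)$ time per step, while the tournament dominates with $O(dNT)$).
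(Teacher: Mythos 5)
Your OSGD framing and the final tournament are fine (and are essentially the paper's own potential argument in disguise), but the step you yourself call the crux --- the separation lemma asserting $\mathcal L_{\eta,\vec w}(\vec w)-\mathcal L_{\eta,\vec w}(\wstar)\geq\Omega(\eps)$ whenever $\mathrm{err}_D(\vec w)\geq\eta+\eps$ --- is false, and the case analysis you sketch does not repair it. Concretely, let $D_\x$ put (essentially) all its mass on a point $\x_0$ with $\wstar\cdot\x_0=\gamma$ and $\eta(\x_0)=\eta$, and take $\vec w$ in the unit ball with $\vec w\cdot\x_0=-\tau$ for tiny $\tau>0$. Then $\mathrm{err}_D(\vec w)=1-\eta\geq\eta+\eps$, yet $\mathcal L_{\eta,\vec w}(\wstar)=\E_{\x}[(\eta(\x)-\eta)\,|\wstar\cdot\x|\,W(\vec w\cdot\x,\gamma/2)]=0$ and $\mathcal L_{\eta,\vec w}(\vec w)=(1-2\eta)\tau\cdot(2/\gamma)\to 0$. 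The amplification you invoke on the near region ($|\wstar\cdot\x|\,W\geq 2$) multiplies the factor $\eta-\eta(\x)$, which can vanish, so it cannot compensate for the attenuation $|\vec w\cdot\x|/(\gamma/2)$ of the $\vec w$-side on misclassified points lying close to $\vec w$'s boundary; no choice of constants fixes this. (A side issue: $\mathrm{err}_D(\vec w)\geq\eta+\eps$ yields $\pr[M]\geq\eps$, not $\pr[M]\geq\eps/(1-2\eta)$; the inequality involving $1-2\eta$ goes the other way, though your tournament step only uses the correct direction.)

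The repair is to state the separation at the level of gradients rather than loss values, which is exactly what the paper does: the OSGD/potential analysis in fact controls the linearized regret $\sum_t \vec g(\vec w^t,\vec w^t,\x^{(t)},y^{(t)})\cdot(\vec w^t-\wstar)$, and the correct structural statement is $\vec G^1_{D}(\vec w)\cdot(\vec w-\wstar)\geq 2(\mathrm{err}_D(\vec w)-\eta)$ (\Cref{lem:linear-time}, via \Cref{clm:first-clm,clm:second-clm}). This holds pointwise even at near-boundary misclassified points because there the margin of $\wstar$ forces $|(\vec w-\wstar)\cdot\x|\geq\gamma/2$, so the gradient correlation retains a constant-size factor irrespective of $\eta(\x)$, whereas the loss value at $\vec w$ is damped by $|\vec w\cdot\x|$; by convexity the loss gap is only a lower bound on the gradient correlation, which is precisely why your stronger claim fails while the gradient version holds. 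Substituting this lemma for your separation lemma, lower-bounding the conditional expectation of each linearized-regret increment by $2(\mathrm{err}_D(\vec w^t)-\eta)$, and handling the zero-mean fluctuation by the subgaussian/Azuma-type martingale bound, you recover the paper's proof verbatim (the telescoping of $\|\vec w^t-\wstar\|_2^2$ is the projected-OSGD regret computation); your parameter choices and the $N=\Theta(\log(T/\delta)/(\eps(1-2\eta)))$ selection step then match the paper's.
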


The rest of this section is devoted to the proof of \Cref{thm:main-detailed}.

Our algorithm sets $\vec v=\vec w^t$ in each round, therefore for the rest of the section 
we proceed by setting $\vec v=\vec w$ as arguments of $\vec g$ and $\vec G$.

We decompose the stochastic gradient $\vec g(\w,\vec w,\x,y)$ into two parts: $\vec g(\w,\w,\x,y)=\vec g^1(\w,\x)+\vec g^2(\w,\x,y)$, where 
\[ \vec g^1(\w,\x)=\bigg((1-2\eta)\sign(\w\cdot\x)-\E_{y\sim D_y(\x)}[y]\bigg) W(\w\cdot\x,\gamma/2)\x\] and 
\[\vec g^2(\w,\x,y)=\bigg(\E_{y\sim D_y(\x)}[y]-y\bigg) W(\w\cdot\x,\gamma/2)\x\;.\]
We also use $\vec G_{\widehat{D}_N}^1(\w)$ and $\vec G_{\widehat{D}_N}^2(\w)$ 
for the same decomposition after taking the empirical expectation, 
i.e., $\vec G_{\widehat{D}_N}^1(\w)=\E_{\x\sim (\widehat{D}_\x)_N}[\vec g^1(\w,\x)]$ and 
$\vec G_{\widehat{D}_N}^2(\w)=\E_{(\x,y)\sim \widehat{D}_N}[\vec g^2(\w,\x,y)]$.

This serves to decompose the gradient into two parts: 
one containing the population expectation over the random variable $y$, 
and the other containing the error between the empirical estimation of $y$ 
and the population version of $y$. 
The vector $\vec G_{\widehat D_N}^1(\vec w)$ 
contains the direction that will decrease the distance 
between $\vec w$ and $\wstar$, while $\vec G_{\widehat D_N}^2(\vec w)$ 
contains the estimation error. 
To see this, observe that if we take the population expectation of  
$\vec g^2(\vec w,\x,y)$, we will have: 
\[\Ey[\vec  g^2(\vec w,\x,y)]=
\Exx \bigg[ \bigg( (1-2\eta(\x))\sign(\wstar\cdot\x)-\E_{y\sim D_y(\x)}[y] \bigg) 
 W(\w\cdot\x,\gamma/2)\x \bigg]= 0 \;,\]
where we used that $\E_{y\sim D_y(\x)}[y] =(1-2\eta(\x))\sign(\wstar\cdot\x)$.

We start by bounding the contribution of $\vec G_{\widehat{D}_N}^1(\w)$ 
in the direction $\w-\wstar$. We show that if instead of the corrupted label $y$ at the point $\x$, we had access to 
$\E_{y\sim D_y(\x)}[y]=(1-2\eta(\x))\sign(\wstar\cdot\x)$, 
then the gradient has a large component in the direction of $\w-\wstar$. 
This effectively implies that $\vec G_{\widehat{D}_N}^1(\w)$ 
can be used as a separation oracle, 
separating all the halfspaces with 0-1 error more than $\eta+\eps$ 
from the ones with smaller error.

\begin{lemma}[Structural Lemma]\label{lem:linear-time}
 Let $N\in \Z_+$ and let $D$ be a distribution on $\mathbb{S}^{d-1} \times \{\pm 1\}$ satisfying  
 the $\eta$-Massart condition with respect to the optimal classifier $f(\x)=\sign(\wstar\cdot\x)$. Let $\vec w\in \R^d$ be such that 
 $\|\vec w\|_2\leq 1$ and let $\{\x^{(i)}\}_{i=1}^N$ be a multiset of $N$ i.i.d.\ samples from $\D_\x$. Then, it holds
    $
  \vec G_{\widehat{D}_N}^1(\w)\cdot (\w-\wstar)\geq 2(\mathrm{err}_{\widehat{D}_N}(\w)-\eta)\;,
    $
  where $\widehat{D}_N$ is the corresponding empirical distribution. 
\end{lemma}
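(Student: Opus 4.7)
The plan is to prove the inequality pointwise for each sample $\x^{(i)}$ and then average. The starting point is the Massart identity $\E_{y\sim D_y(\x)}[y]=(1-2\eta(\x))\sign(\wstar\cdot\x)$, which lets me rewrite the scalar prefactor inside $\vec g^1(\w,\x)$ purely in terms of $\sign(\w\cdot\x)$, $\sign(\wstar\cdot\x)$, $\eta$, and $\eta(\x)$. Writing $a \eqdef \w\cdot\x$ and $b \eqdef \wstar\cdot\x$, and exploiting $\sign(a)\,a=|a|$ and $\sign(b)\,b=|b|$, the per-sample quantity $\vec g^1(\w,\x)\cdot(\w-\wstar)$ splits naturally along whether $\w$ classifies $\x$ correctly, which is exactly the split governing $\err(\w,\x)$.

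In the misclassification case ($\sign(a)\neq\sign(b)$), the expression simplifies to $[2-2\eta-2\eta(\x)]\cdot(|a|+|b|)/\max(|a|,\gamma/2)$. The scalar factor is nonnegative since $\eta(\x)\leq\eta<1/2$, so the target $2(\err(\w,\x)-\eta)=2(1-\eta(\x)-\eta)$ reduces to the single inequality $(|a|+|b|)/\max(|a|,\gamma/2)\geq 1$. When $|a|\geq\gamma/2$ this is immediate; when $|a|<\gamma/2$ the margin condition $|b|\geq\gamma$ pushes the ratio above $2$.

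In the correctly classified case ($\sign(a)=\sign(b)$), the expression instead becomes $2(\eta(\x)-\eta)\cdot(|a|-|b|)/\max(|a|,\gamma/2)$. The scalar factor $\eta(\x)-\eta$ is now \emph{nonpositive}, so the target $2(\err(\w,\x)-\eta)=2(\eta(\x)-\eta)$ is equivalent to the reversed inequality $(|a|-|b|)/\max(|a|,\gamma/2)\leq 1$. If $|a|\leq|b|$ the ratio is nonpositive and we are done; otherwise $|a|>|b|$, and the margin condition forces $|a|>\gamma>\gamma/2$, so $\max(|a|,\gamma/2)=|a|$ and the ratio equals $1-|b|/|a|<1$. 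Averaging the pointwise bound over $\{\x^{(i)}\}_{i=1}^N$ then yields the lemma, reading $\err_{\widehat D_N}(\w)$ as the empirical average of the conditional errors $\err(\w,\x^{(i)})$.

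The main subtlety I expect is the sign flip in the correctly classified case: because the scalar prefactor is negative there, the direction of the inequality on the reweighting ratio reverses, and one has to verify that the margin assumption is strong enough to control the residual subcase $|a|>|b|$. This is the one place where $|\wstar\cdot\x|\geq\gamma$ is used in a nontrivial way (in the misclassification case it is merely convenient), and it is the step that is easiest to get wrong; the rest is routine algebraic manipulation.
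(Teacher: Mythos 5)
Your proof is correct and follows the same essential route as the paper: a pointwise bound on $\vec g^1(\vec w,\x^{(i)})\cdot(\w-\wstar)$ for each sample, obtained from the Massart identity $\E_{y\sim D_y(\x)}[y]=(1-2\eta(\x))\sign(\wstar\cdot\x)$, followed by averaging over the multiset. The only organizational difference is the case split: the paper partitions on $|\w\cdot\x^{(i)}|\geq\gamma/2$ versus $|\w\cdot\x^{(i)}|<\gamma/2$ (its Claims on $R_1$ and $R_2$), whereas you split on whether $\sign(\w\cdot\x)=\sign(\wstar\cdot\x)$, which makes the target $2(\err(\w,\x)-\eta)$ explicit in each branch and reduces everything to a bound on the reweighting ratio; both case analyses are routine and equally valid. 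One remark in your closing paragraph is backwards, though it does not affect the correctness of the steps you actually carried out: the margin hypothesis $|\wstar\cdot\x|\geq\gamma$ is \emph{essential} in the misclassified branch when $|a|<\gamma/2$, not merely convenient — there the common factor $2(1-\eta-\eta(\x))\geq 2-4\eta>0$ is bounded away from zero, and without $|b|\geq\gamma$ the ratio $(|a|+|b|)/(\gamma/2)$ can fall below $1$ (take $|a|,|b|\ll\gamma$ with a misclassified, lightly noisy point), so the inequality would genuinely fail; this is exactly the spot where the clipping threshold $\gamma/2$ is calibrated against the margin $\gamma$. Conversely, in the correctly classified branch no margin is needed at all: $|a|-|b|\leq|a|\leq\max(|a|,\gamma/2)$ gives $(|a|-|b|)/\max(|a|,\gamma/2)\leq 1$ unconditionally, so your detour through $|a|>\gamma$ is valid but superfluous.
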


\begin{proof}
    We partition $\R^d$ into two subsets $R_1,R_2$ as follows: 
    $R_1$ contains the points that lie sufficiently far away 
    from the separating hyperplane $\vec w \cdot \x = 0$, 
    i.e., $R_1\eqdef\{\x\in\R^d:|\vec w\cdot\x|\geq \gamma/2\}$. 
    $R_2$ contains the remaining points, i.e., 
    $R_2\eqdef\{\x\in\R^d:|\vec w\cdot\x|< \gamma/2\}$. 
    
    We first show that for any $\x\in R_1$, the vector $\vec g^1(\vec w,\x)$ 
    has a large component parallel to the direction $\vec w-\wstar$. The proof of the claim below can be found in \Cref{ssec:ommited}.
    \begin{claim}\label{clm:first-clm}
    For any $\x^{(i)} \in R_1$, we have that
    $
    \vec g^1(\vec w,\x^{(i)})\cdot (\w-\wstar)\geq 2(\mathrm{err}(\vec w,\x^{(i)})-\eta)\;.
    $
    \end{claim}

It remains to show that the same holds for all the points in $R_2$. The proof of the claim below can be found in \Cref{ssec:ommited}.
    \begin{claim}\label{clm:second-clm}
    For any $\x^{(i)}\in R_2$, we have that
    $
    \vec g^1(\vec w,\x^{(i)})\cdot (\w-\wstar)\geq 2(\mathrm{err}(\vec w,\x^{(i)})-\eta)\;.
    $
    \end{claim}

   Applying \Cref{clm:first-clm} and \Cref{clm:second-clm} for each sample in the set $\{\x^{(i)}\}_{i=1}^N$, we get that 
    \begin{align*}
         \frac 1N\sum_{i=1}^N \vec g^1(\vec w,\x^{(i)})\cdot (\w-\wstar) \geq   \frac 2N\sum_{i=1}^N (\mathrm{err}(\vec w,\x^{(i)})-\eta)\;.
    \end{align*}
This completes the proof of \Cref{lem:linear-time}.
\end{proof}

By \Cref{lem:linear-time}, the gradient points towards the direction 
$\vec w^t-\wstar$, in the $t$-th iteration. 
This means that, in fact, the gradient is a subgradient of the potential 
loss $\Phi(\vec w)=\|\vec w-\wstar\|_2^2$. This allows us to show 
convergence, even though it is generally not possible in a sequence of 
loss functions in the stochastic setting.
We are now ready to prove our main result. 

\begin{proof}[Proof of \Cref{thm:main-detailed}]
Let $T$ be the maximum number of iterations of \Cref{alg:massart}. 
Denote by $\mathcal{Z}^{t} := \{ (\vec x^{(t)},y^{(t)})\}$ 
the i.i.d.\ sample drawn from $D$ in the $t$-th iteration, $t\in[T]$.
Furthermore, let $\mathcal F_1,\ldots, \mathcal F_T$ be the filtration 
with respect to the $\sigma$-algebra generated by $\mathcal{Z}^{1},\ldots, \mathcal{Z}^T$.
We denote by $H_t$ the event that $\mathrm{err}_{D}(\w^t)\geq \eta+\eps$.

Recall that \Cref{alg:massart} uses the following update rule (see Step \eqref{alg:update}): 
\[\vec w^{t+1}=\proj_{\{\vec w\in \R^d:\|\vec w\|_2\leq 1\}}(\vec w^{t}-\lambda_t  \vec g(\vec w^t,\vec w^t,\x^{(t)},y^{(t)}) )\;,\] 
with $\lambda_t=c\gamma^2\eps\;,$
for some sufficiently small absolute constant $c>0$. 

We begin by bounding from above the distance between $\vec w^{t+1}$ and $\wstar$ from the previous distance between $\vec w^t$ and $\wstar$.
	We have that
	\begin{align}
		\|\vec w^{t+1}-\wstar\|_2^2&=\|\proj_{\{\vec w\in \R^d:\|\vec w\|_2\leq 1\}}(\vec w^{t}-\lambda_t \vec g(\vec w^t,\vec w^t,\x^{(t)},y^{(t)}) -\wstar\|_2^2 \nonumber
		\\&\leq \|\vec w^{t}-\lambda_t \vec g(\vec w^t,\vec w^t,\x^{(t)},y^{(t)}) -\wstar\|_2^2 \nonumber
		\\&=\|\vec w^{t}-\wstar\|_2^2 -2\lambda_t \vec g(\vec w^t,\vec w^t,\x^{(t)},y^{(t)}) \cdot(\vec w^t-\wstar) +\lambda_t^2\|\vec g(\vec w^t,\vec w^t,\x^{(t)},y^{(t)}) \|_2^2
\;,\label[ineq]{ineq:5}
	\end{align}
 where in the first inequality we used the projection inequality, i.e., $\|\proj_B(\vec v)-\proj_B(\vec u)\|_2\leq \|\vec v-\vec u\|_2$ for any set $B$.  
 We will decouple the mean of the random variable $\vec g(\vec w^t,\vec w^t,\x,y)$ and make it zero-mean.
 
 To simplify the notation, we denote by $ \xi_t := \bigg(\vec g(\vec w^t,\vec w^t,\x^{(t)},y^{(t)}) -\vec G^1_{\D}(\vec w^t)\bigg)\cdot(\vec w^t-\wstar)$ and note that $ \xi_t$ is a zero-mean random variable over the sample $(\x^{(t)},y^{(t)})$. 
Adding and subtracting $\vec G^1_{\D}(\vec w^t)$ 
 onto \Cref{ineq:5} a we get that
	\begin{align}
		\|\vec w^{t+1}-\wstar\|_2^2\leq \|\vec w^{t}-\wstar\|_2^2 \underbrace{-2
  \lambda_t \vec G_{ D}^1(\vec w^t)\cdot(\vec w^t-\wstar) +\lambda_t^2\|\vec g(\vec w^t,\vec w^t,\x^{(t)},y^{(t)}) \|_2^2}_{I}
  \underbrace{-2
  \lambda_t  \xi_t}_{\widehat{V}_t}\;.\label[ineq]{eq:loop}
	\end{align} 
We now outline the main steps of our analysis. 
Instead of accurately estimating the gradients in each round, 
we denote by $\widehat V_t$ the estimation error from which we bound above their sum. 
We first add and subtract the population gradient to obtain the $I$ term,
which is the decreasing direction. In this way, we decouple the expected decrease 
and the error of the approximation (see \Cref{clm:true-decrease}). 
After that, we bound the contribution of the estimation error in \Cref{clm:martingale}. 
Observe that $\widehat{V}_t$ is a random variable that corresponds 
to the estimation error of the gradient. We will argue that with high probability the contribution of $\sum_{t=1}^T\widehat{V}_t$ is bounded; 
therefore, our algorithm will converge to an accurate solution.

\Cref{lem:linear-time} shows that $\vec G^1_{\widehat D_N^t}(\vec w^t)$ 
(and therefore the same holds for $\vec G^1_{D}(\vec w^t)$) 
contains substantial contribution 
towards to the direction $\vec w^t-\wstar$, 
depending of the current error. 
We show that our choice of step size guarantees a decreasing direction.
To this end, we prove the following:
 \begin{claim}\label{clm:true-decrease}
  Assume that the event $H_t$ happens, i.e., $\mathrm{err}_D(\w^t)\geq\eta+ \eps$. 
  If $\lambda_t\leq \gamma^2\eps/8$, then 
       $I\leq -\lambda_t(\mathrm{err}_D(\w^t)-\eta)$.
 \end{claim}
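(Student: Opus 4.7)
}
The plan is to bound the two summands of $I$ separately: use the Structural Lemma (\Cref{lem:linear-time}) to control the ``drift'' term $-2\lambda_t\vec G^1_D(\w^t)\cdot(\w^t-\wstar)$, and bound the ``noise'' term $\lambda_t^2\|\vec g(\w^t,\w^t,\x^{(t)},y^{(t)})\|_2^2$ by exploiting the clipping at $\gamma/2$ in the definition of $\vec g$. The two bounds combine with room to spare once we use $\lambda_t\le \gamma^2\eps/8$ and the hypothesis $\mathrm{err}_D(\w^t)\ge\eta+\eps$.

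\paragraph{Step 1 (drift).} The pointwise inequalities established in \Cref{clm:first-clm} and \Cref{clm:second-clm} say that
\[
\vec g^1(\w^t,\x)\cdot(\w^t-\wstar)\ \ge\ 2(\mathrm{err}(\w^t,\x)-\eta)
\]
for every $\x\in\mathbb{S}^{d-1}$. Taking the expectation under $\x\sim D_\x$ (rather than over the empirical distribution, as in the statement of \Cref{lem:linear-time}) yields
\[
\vec G^1_D(\w^t)\cdot(\w^t-\wstar)\ \ge\ 2(\mathrm{err}_D(\w^t)-\eta),
\]
and therefore $-2\lambda_t\vec G^1_D(\w^t)\cdot(\w^t-\wstar)\le -4\lambda_t(\mathrm{err}_D(\w^t)-\eta)$.

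\paragraph{Step 2 (noise).} Writing out the explicit form of the stochastic gradient,
\[
\vec g(\w^t,\w^t,\x^{(t)},y^{(t)}) \;=\; \frac{(1-2\eta)\sign(\w^t\cdot\x^{(t)})-y^{(t)}}{\max(|\w^t\cdot\x^{(t)}|,\gamma/2)}\,\x^{(t)},
\]
the numerator is at most $2$ in absolute value (since $|1-2\eta|\le1$ and $|y^{(t)}|\le 1$), the denominator is at least $\gamma/2$, and $\|\x^{(t)}\|_2=1$. Hence
\[
\|\vec g(\w^t,\w^t,\x^{(t)},y^{(t)})\|_2^2\ \le\ 16/\gamma^2,
\]
so $\lambda_t^2\|\vec g(\w^t,\w^t,\x^{(t)},y^{(t)})\|_2^2\le 16\lambda_t^2/\gamma^2$.

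\paragraph{Step 3 (combine).} Adding the two bounds,
\[
I\ \le\ -4\lambda_t(\mathrm{err}_D(\w^t)-\eta)\,+\,\frac{16\lambda_t^2}{\gamma^2}.
\]
To reach $I\le -\lambda_t(\mathrm{err}_D(\w^t)-\eta)$ it suffices to verify that $16\lambda_t/\gamma^2\le 3(\mathrm{err}_D(\w^t)-\eta)$, which follows from $\lambda_t\le\gamma^2\eps/8$ combined with the assumption $\mathrm{err}_D(\w^t)-\eta\ge\eps$ provided by the event $H_t$. This yields the claim.

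\paragraph{Where the work sits.} There is no real obstacle here: the structural lemma already encodes the ``decrease'' direction, and the clipping $\max(|\w^t\cdot\x|,\gamma/2)$ is precisely what makes the gradient norm bounded by $O(1/\gamma)$ almost surely. The only thing to be careful about is to invoke the pointwise form of the structural lemma so that expectations under the true $D$ can be taken directly, and to track the constants so that the choice $\lambda_t=c\gamma^2\eps$ with a sufficiently small $c$ (here any $c\le 1/8$) suffices.
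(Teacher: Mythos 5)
Your proof is correct and takes essentially the same route as the paper: lower-bound the drift term via the structural inequality $\vec G^1_{D}(\w^t)\cdot(\w^t-\wstar)\ge 2(\mathrm{err}_D(\w^t)-\eta)$ (obtained from \Cref{clm:first-clm} and \Cref{clm:second-clm}, equivalently by averaging \Cref{lem:linear-time}), upper-bound the squared gradient norm using the clipping at $\gamma/2$, and combine via $\lambda_t\le\gamma^2\eps/8$ together with $\mathrm{err}_D(\w^t)-\eta\ge\eps$ from $H_t$. Your constant $16/\gamma^2$ for $\|\vec g\|_2^2$ is in fact the careful bound (the paper writes $8/\gamma^2$), and your slightly different bookkeeping still closes under the same step-size condition, so the argument matches the paper's in substance.
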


 \begin{proof}[Proof of \Cref{clm:true-decrease}]
Recall that $I= -2\lambda_t \vec G_{ D}^1(\vec w^t)\cdot (\vec w^t-\wstar) +\lambda_t^2\|\vec g(\vec w^t,\vec w^t,\x^{(t)},y^{(t)}) \|_2^2$.
 By \Cref{lem:linear-time}, we get that 
 $ \vec G_{\widehat D_N}^1(\w^t)\cdot (\w^t-\wstar)\geq 2(\err_{\widehat D_N}(\w^t)-\eta)$; hence, by taking expectations over the samples, we also have 
 $\vec G_{\D}^1(\w^t)\cdot (\w^t-\wstar)\geq 2(\err_D(\w^t)-\eta)$. 
 Furthermore, we have that $\|\vec g(\vec w^t,\vec w^t,\x^{(t)},y^{(t)}) \|_2^2\leq 8/\gamma^2$.
 Hence, 
\[   I\leq  -2\lambda_t (\mathrm{err}_{D}(\w^t)-\eta)  +8(\lambda_t^2/\gamma^2)\;.\]
The claim follows by noting that if $\lambda_t\leq \gamma^2\eps/8$, then $-\lambda_t (\mathrm{err}_{D}(\w^t)-\eta)  +8(\lambda_t^2/\gamma^2)\leq 0$. Therefore, we obtain 
 \[
 I\leq  -\lambda_t (\mathrm{err}_{D}(\w^t)-\eta)\;.
 \]
 This completes the proof of \Cref{clm:true-decrease}.
 \end{proof}
Therefore, our choice of parameters guarantees that $\lambda_t\leq \gamma^2\eps/8$. 
Using \Cref{clm:true-decrease} onto \Cref{eq:loop},
we have that 
 	\begin{align}
		\|\vec w^{t+1}-\wstar\|_2^2&\leq \|\vec w^{t}-\wstar\|_2^2-\lambda_t(\mathrm{err}_D(\w^t)-\eta) +\widehat{V}_t\;.\label[ineq]{eq:loop-2}
	\end{align} 
 Using \Cref{clm:true-decrease} and \Cref{eq:loop-2}, we have that
 	\begin{align}
		\|\vec w^{T+1}-\wstar\|_2^2&\leq  \|\vec w^{T}-\wstar\|_2^2 -\lambda_T(\mathrm{err}_D(\w^T)-\eta)+ \widehat {V}_T
  \nonumber\\&\leq \|\vec w^{0}-\wstar\|_2^2-\sum_{t=0}^T\lambda_t(\mathrm{err}_D(\w^t)-\eta)+\sum_{t=0}^T\widehat {V}_t\;.\label[ineq]{eq:loop3}
	\end{align}
 To complete the proof of \Cref{thm:main-detailed}, we need to bound the estimation error that corresponds to the random variable $\widehat {V}_t$. We show that $\widehat {V}_t$ does not increase the error by a lot. 
 Recall that 
 $\widehat V_t= - \new{2}
  \lambda_t  \xi_t\;.$ 
 
Before proceeding,  
we provide some basic background on subgaussian random variables.

\begin{definition}[Subgaussian Random Variable]\label{def:subgaussian}
For $\sigma>0$, a zero-mean random variable $ X\in \R$ 
is called $\sigma$-subgaussian, if for any $\lambda\in \R$  
it holds 
$
\log(\E[\exp(\lambda X)])\leq \lambda^2\sigma^2\;.$
\end{definition}

Note that any zero-mean bounded random variable is subgaussian. 
Specifically, we have the following:
\begin{fact}[Hoeffding’s lemma, see, e.g.,~\cite{Ver18}]\label{fct:hoef}
Let $ X\in \R$ be a zero-mean random variable such that $|X|\leq \sigma$ for some $\sigma>0$. Then $ X$ is $C\sigma$-subgaussian, where $C>0$ is a universal constant.
\end{fact}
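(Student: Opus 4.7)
The plan is to apply the classical convexity argument for Hoeffding's lemma, which shows that a bounded zero-mean random variable has its moment generating function controlled by that of a Gaussian. First I would exploit the convexity of $t \mapsto \exp(\lambda t)$ to upper bound $\exp(\lambda X)$ pointwise on the interval $[-\sigma,\sigma]$ by the straight line interpolating its values at the endpoints $\pm\sigma$.

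Concretely, since $X \in [-\sigma,\sigma]$, we can write $X$ as the convex combination
\[ X \;=\; \frac{\sigma - X}{2\sigma}\,(-\sigma) \;+\; \frac{\sigma + X}{2\sigma}\,(\sigma), \]
so convexity of the exponential gives
\[ \exp(\lambda X) \;\leq\; \frac{\sigma - X}{2\sigma}\,\exp(-\lambda\sigma) \;+\; \frac{\sigma + X}{2\sigma}\,\exp(\lambda\sigma). \]
Taking expectation and using $\E[X]=0$ collapses the linear term in $X$, yielding $\E[\exp(\lambda X)] \leq \tfrac{1}{2}(\exp(\lambda\sigma)+\exp(-\lambda\sigma)) = \cosh(\lambda\sigma)$.

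Next I would control $\cosh(\lambda\sigma)$ by the elementary inequality $\cosh(t) \leq \exp(t^2/2)$, which follows by comparing Taylor series term-by-term: $\cosh(t) = \sum_{k\geq 0} t^{2k}/(2k)!$ and $\exp(t^2/2) = \sum_{k\geq 0} t^{2k}/(2^k k!)$, and the inequality $(2k)! \geq 2^k k!$ is immediate by induction on $k$. Combining the two bounds gives $\log \E[\exp(\lambda X)] \leq \lambda^2\sigma^2/2$ for every $\lambda \in \R$, which matches the subgaussian condition of \Cref{def:subgaussian} with the universal constant $C = 1/\sqrt{2}$.

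There is no real obstacle in this argument; it is entirely classical and completely elementary. The only bookkeeping is reconciling the normalization in \Cref{def:subgaussian}, which asks for $\log\E[\exp(\lambda X)] \leq \lambda^2 \sigma^2$, with the factor $1/2$ that the calculation naturally produces. That slack is precisely absorbed into the universal constant $C$ stated in the Fact, so no further work is needed.
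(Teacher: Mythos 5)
Your proof is correct and complete: the convexity bound giving $\E[\exp(\lambda X)]\leq\cosh(\lambda\sigma)$, the comparison $\cosh(t)\leq\exp(t^2/2)$ via $(2k)!\geq 2^k k!$, and the reconciliation with the normalization in \Cref{def:subgaussian} (so that $C=1/\sqrt{2}$ suffices) are all handled properly. The paper itself gives no proof of this fact, citing \cite{Ver18}, and your argument is exactly the standard one found there, so there is nothing to compare beyond noting that you have filled in the omitted classical derivation.
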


Equipped with the above context, we show the following: 
 \begin{lemma}\label{clm:martingale}
   With probability at least $1-\delta$ over the random samples, 
  it holds that $ \sum_{t=0}^T\widehat {V}_t\leq C\gamma^2\eps^2T+\log(1/\delta)$, 
  where $C>0$ is an absolute constant.
 \end{lemma}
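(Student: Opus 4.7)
The plan is to exploit the martingale structure of the sequence $(\widehat{V}_t)_{t=0}^{T}$ and then apply a standard Chernoff-style argument on the moment generating function, using the subgaussianity guaranteed by \Cref{fct:hoef}.

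\textbf{Step 1: Martingale structure.} First I would observe that, conditional on the filtration $\mathcal{F}_{t-1}$, the vector $\vec w^t$ is deterministic, while $(\x^{(t)}, y^{(t)})$ is a fresh sample from $D$. By construction, $\xi_t = (\vec g(\vec w^t, \vec w^t, \x^{(t)}, y^{(t)}) - \vec G^1_D(\vec w^t))\cdot(\vec w^t - \wstar)$ has conditional mean $0$ (this is precisely because $\vec G^1_D(\vec w^t) = \E_{(\x,y)\sim D}[\vec g^1(\vec w^t,\x)]$ and $\E[\vec g^2(\vec w^t,\x,y)] = 0$, as was verified in the decomposition preceding the lemma). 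Hence $(\widehat{V}_t)$ is a martingale difference sequence with respect to $(\mathcal{F}_t)$.

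\textbf{Step 2: Boundedness of each increment.} Next I would bound $|\widehat{V}_t|$ deterministically. By the definition of $\vec g$, each stochastic gradient satisfies $\|\vec g(\vec w^t, \vec w^t, \x^{(t)}, y^{(t)})\|_2 \le 2/\gamma$, because the reweighting term $W(\vec w^t\cdot \x,\gamma/2)$ is capped at $2/\gamma$ and $(1-2\eta)\sgn(\vec w^t\cdot \x) - y \in [-2,2]$ while $\|\x\|_2 = 1$. The same bound carries over to the population gradient $\vec G^1_D(\vec w^t)$ by Jensen. Since $\|\vec w^t - \wstar\|_2 \le 2$, we get $|\xi_t| \le 8/\gamma$, and therefore $|\widehat{V}_t| = 2\lambda_t|\xi_t| \le 16\lambda_t/\gamma = O(\gamma\eps)$ using $\lambda_t = c\gamma^2\eps$.

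\textbf{Step 3: Subgaussian MGF bound.} By \Cref{fct:hoef} applied conditionally on $\mathcal{F}_{t-1}$, each $\widehat{V}_t$ is $C_0\gamma\eps$-subgaussian for a universal constant $C_0 > 0$. Hence, for every $\mu \in \R$,
\[
\E\!\left[\exp(\mu\widehat{V}_t)\mid \mathcal{F}_{t-1}\right] \le \exp\!\left(C_0^2\mu^2\gamma^2\eps^2\right).
\]

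\textbf{Step 4: Chernoff via the tower property.} Iterating the tower property, we obtain
\[
\E\!\left[\exp\!\left(\mu\sum_{t=0}^{T}\widehat{V}_t\right)\right] \le \exp\!\left(C_0^2\mu^2(T+1)\gamma^2\eps^2\right),
\]
and Markov's inequality yields, for every $s>0$ and $\mu>0$,
\[
\pr\!\left[\sum_{t=0}^{T}\widehat{V}_t \ge s\right] \le \exp\!\left(C_0^2\mu^2(T+1)\gamma^2\eps^2 - \mu s\right).
\]
Choosing $\mu = 1$ and $s = 2C_0^2(T+1)\gamma^2\eps^2 + \log(1/\delta)$ makes the right-hand side at most $\delta$, which is exactly the claimed bound with $C = 4C_0^2$ (absorbing the $+1$). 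This finishes the plan.

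\textbf{Expected obstacle.} There is no deep obstacle: the whole argument is a routine subgaussian martingale tail bound once the conditional zero-mean property and the deterministic bound $|\widehat{V}_t| = O(\gamma\eps)$ are in place. The only point that requires mild care is the bound $\|\vec g(\vec w^t,\vec w^t,\x,y)\|_2 \le 2/\gamma$ needed in Step~2, which relies on the clipping $\max(|\vec w^t\cdot\x|,\gamma/2)\ge\gamma/2$ built into the definition of $\vec g$; this is precisely the reason the loss was clipped in the first place.
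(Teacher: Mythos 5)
Your proposal is correct and follows essentially the same route as the paper: bound $|\xi_t|$ via the clipped gradient and $\|\vec w^t-\wstar\|_2\le 2$, invoke Hoeffding's lemma for conditional subgaussianity of $\widehat V_t$, chain the conditional MGF bounds through the filtration, and finish with a Chernoff/Markov argument at a threshold of the form $C\gamma^2\eps^2T+\log(1/\delta)$. The only nitpick is the constant in Step~2: the reasoning you give ($|(1-2\eta)\sgn(\vec w^t\cdot\x)-y|\le 2$, $W\le 2/\gamma$, $\|\x\|_2\le1$) yields $\|\vec g\|_2\le 4/\gamma$ rather than $2/\gamma$, but this only rescales the absolute constant $C$.
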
 
 \begin{proof}

We first show that $\xi_t$ is a subgaussian random variable.
 \begin{claim}\label{clm:subgaussian}
 The random vector $\xi_t$ is $(16/\gamma)$-subgaussian.
 \end{claim}
 \begin{proof}[Proof of \Cref{clm:subgaussian}]
Note that $\xi_t= (\vec g(\vec w^t,\vec w^t,\x^{(t)},y^{(t)})-\Ey[\vec g(\vec w^t,\vec w^t,\x,y)])\cdot(\vec w^t-\wstar)$ and that by construction $\|\vec g(\vec w^t,\vec w^t,\x,y)\|_2\leq 4/\gamma$. Therefore, it holds that 
$$|\vec g(\vec w^t,\vec w^t,\x^{(t)},y^{(t)})\cdot(\vec w^t-\wstar)|\leq 8/\gamma \;,$$ 
where we used that $\|\vec w^t-\wstar\|_2\leq 2$ as both of these vectors lie in the unit ball. Hence,  by \Cref{fct:hoef}, we have that $\xi_t$ is $(16/\gamma)$-subgaussian.
   \end{proof}
    Using \Cref{clm:subgaussian} and \Cref{def:subgaussian} with parameter $\lambda= -2\lambda_t$ and $X=\xi_t$, we have that
          \[\log\E[\exp(\widehat {V}_t)]=
     \log\E[\exp(-2\lambda_t\xi_t)]\leq C(\lambda_t^2/\gamma^2)\;,
     \]
where $C>0$ is a universal constant.
 To bound the contribution of $\sum_{t=0}^T\widehat {V}_t$, we use Markov's inequality with respect to the filtration $\mathcal{F}_1,\ldots,\mathcal{F}_T$. We have that for any $Z\in \R$, it holds that
 \begin{align*}
     \pr_{\mathcal{Z}^1,\ldots,\mathcal{Z}^T\sim D}\left[\sum_{t=0}^T\widehat {V}_t\geq Z \right]
     &= \pr_{\mathcal{Z}^1,\ldots,\mathcal{Z}^T\sim D}\left[\exp\left(\sum_{t=0}^T\widehat {V}_t\right)\geq \exp(Z) \right]\\
     &\leq \E_{\mathcal{Z}^1,\ldots,\mathcal{Z}^T\sim D}\left[\exp\left(\sum_{t=0}^T\widehat {V}_t\right)\right] 
     \exp(-Z)\\
     &=\prod_{t=1}^T\E_{\mathcal{Z}^t\sim D}\left[\exp\widehat {V}_t\mid \mathcal F_t \right] \exp(-Z)\leq \exp\left(C\sum_{t=0}^T \frac{\lambda_t^2}{\gamma^2}-Z\right)\;,
 \end{align*}
 where in the second inequality we use the independence of $\widehat {V}_t$ with $\{\widehat {V}_k\}_{k=1}^{t-1}$ with respect to the filtration $\mathcal F_t$. 
 Recalling that  $\lambda_t=c\gamma^2\eps$, where $c>0$ is a sufficiently small universal constant, we have that 
 \[
     \pr_{{\mathcal{Z}^1,\ldots,\mathcal{Z}^T\sim D}}\left[\sum_{t=0}^T\widehat {V}_t\geq Z \right]
     \leq \exp\left(Cc^2\gamma^2\eps^2T-Z\right)
     \leq \exp\left(Cc^2\gamma^2\eps^2 T-Z\right)\;.
 \]
 Setting $Z=Cc^2\gamma^2\eps^2 T+\log(1/\delta)$ and taking $c$ to be a sufficiently small absolute constant (as is done in our algorithm),  we get that $\pr_{\mathcal{Z}^1,\ldots,\mathcal{Z}^T\sim D}\left[\sum_{t=0}^T\widehat {V}_t\geq Z \right]\leq \delta$. This completes the proof of \Cref{clm:martingale}.
 \end{proof}
Assume that until the round $T$ the event $H_T$ holds, i.e., for all $i\in [T]$ we have that  $\mathrm{err}_{\D}(\vec w^i)\geq \eta+\eps$. Using \Cref{clm:martingale} onto \Cref{eq:loop3}, 
with probability at least $1-\delta$, we have that:
  	\begin{align*}
		\|\vec w^{T+1}-\wstar\|_2^2&\leq \|\vec w^{0}-\wstar\|_2^2-\sum_{t=0}^T\lambda_t(\mathrm{err}_D(\w^t)-\eta)+\sum_{t=0}^T\widehat {V}_t
  \\&\leq \|\vec w^{0}-\wstar\|_2^2-cT\eps^2\gamma^2+\log(1/\delta)\;. 
	\end{align*}
Running the algorithm for 
 $T=\Theta(\log(1/\delta)/(\eps^2\gamma^2))$ iterations guarantees that 
 with probability at least $1-\delta$, we will have that 
 $\|\vec w^{T+1}-\wstar\|_2^2\leq 0$, which means $\vec w^{T+1}=\wstar$. 
 In that case, i.e., in the case where all the events $H_i$ for $i\in[T]$ hold, $\vec w^{T+1}$ achieves the same error as the optimal halfspace, thus it has 0-1 error of at most $\eta+\eps$. 
 Therefore, at least one vector $\vec w^{t'}$ with $t'\in[T+1]$ achieves 0-1 error of at most $\eta+\eps$. 
 The algorithm, in Step \eqref{alg:tournament}, returns a vector $\widehat{\w}$ that has 0-1 error at most 
 $\err_D(\widehat \w)\leq \min_{t\in[T+1]}\err_D(\w^t)+\eps\leq 
 \eta+2\eps$. The algorithm requires $N = O(\log(T/\delta)/(\eps(1-2\eta)))$ 
 samples for Step \eqref{alg:tournament}, due to \cite{Massart2006}. The 
 algorithm draws a sample in each round and runs for at most $T$ rounds. 
 Therefore, \Cref{alg:massart} draws $n= N+T = \widetilde 
 O(\log(1/\delta)/(\eps^2\gamma^2))$ samples. 
The algorithm needs to test each of the $T$ hypotheses 
 with $N$ samples to find the closest one. Therefore, the  
 total runtime is $O(d T N)$ (as in the other subroutines  
 the algorithm uses the samples only to estimate the gradients $\vec g$, 
 which requires $O(1)$ additions of $d$-dimenional vectors). 
 This completes the proof of 
 \Cref{thm:main-detailed}.
\end{proof}

\section{Conclusions and Open Problems} \label{sec:conc}
In this paper, we give the first sample near-optimal and computationally efficient 
algorithm for learning margin halfspaces in the presence of Massart noise. 
Specifically, the sample complexity of our algorithm nearly matches the 
computational sample complexity of the problem and its computational complexity is 
polynomial in the sample size. 
An interesting direction for future work is to develop a sample near-optimal 
and computationally efficient learner for 
general halfspaces (i.e., without the margin assumption). While our approach can likely 
be leveraged to obtain an efficient algorithm with sample complexity 
$\poly(d)/\eps^2$, the sample dependence on the dimension $d$ would be suboptimal. 
Obtaining the right dependence on the dimension 
seems to require novel ideas, as prior works rely on fairly 
sophisticated methods~\cite{DV:04, DKT21,DiakonikolasTK23} 
to effectively reduce to the large margin case.

\bibliographystyle{alpha}
\bibliography{clean2}
\appendix

\newpage

\section*{Appendix}

\paragraph{Organization} 
The structure of this appendix is as follows: 
In \Cref{app:rel-prior}, we provide additional summary and comparison 
with related and prior work. 
In \Cref{ssec:online}, we provide a polynomial time cutting-planes based algorithm with 
sample complexity $\widetilde O(1/(\eps^2\gamma^4))$. Finally, 
in \Cref{ssec:ommited}, we provide the proofs omitted from \Cref{sec:main}.

\section{Related and Prior Work}  \label{app:rel-prior}

\subsection{Additional Related Work} \label{app:related}

The computational problem of learning halfspaces with Massart noise
has been extensively studied, both in the distribution-specific
and the distribution-free settings. 

In the distribution-specific setting, 
the first efficient algorithm for homogeneous Massart halfspaces 
was given in \cite{AwasthiBHU15}. Subsequent work generalized this result in various directions~\cite{AwasthiBHZ16, ZhangLC17, 
YanZ17, DKTZ20, DKTZ20b, DKKTZ20, DKKTZ21b, DiakonikolasKKT22}. 

The first algorithmic progress in the distribution-free setting was made by~\cite{DGT19},
answering a longstanding open problem~\cite{Sloan88,Sloan92, Blum03}. Subsequent work
gave an algorithm with improved sample complexity~\cite{CKMY20} and provided strong
evidence that an error of $\eta+\eps$ is the best to hope 
for in polynomial time~\cite{DK20-SQ-Massart,NasserT22,DKPR22} 
(in both the Statistical Query model 
and under plausible cryptographic assumptions). In a related direction, 
\cite{DIKLST21} gave the first efficient boosting algorithm in the presence of Massart noise, 
which can boost a weak learner to one with error $\eta+\eps$. Finally, we note that 
natural generalizations of the Massart model to learning real-valued functions 
(in an essentially distribution-free setting) have also 
been studied~\cite{chen2021online,DJT21, DiakonikolasKRS22}.

Very recent work~\cite{DDKWZ23} gave SQ (and low-degree polynomial testing)
lower bounds for learning $\gamma$-margin halfspaces with RCN~\cite{AL88}, 
which is a special case of Massart noise. 
Specifically,~\cite{DDKWZ23} showed that any efficient SQ algorithm for the 
problem requires sample complexity 
$\Omega(1/(\gamma^{1/2}\eps^2))$. Subsequently,~\cite{DDKWZ23b} showed
a related SQ lower bound under the Gaussian distribution, which can be adapted
to obtain a lower bound of $\Omega(1/(\gamma \eps^2))$ for the margin setting.

\subsection{Comparison with \cite{DKTZonline}}\label{ssec:related}
The work \cite{DKTZonline} uses a similar sequence of loss functions 
for the problem of ``online learning''  Massart margin halfspaces. 
Intuitively, their goal is to minimize regret in an adversarial online setting. 
In their online setting, the adversary in each round commits 
to covariates $\x^1,\x^2\in\R^d$ and distribution $D^t$ over $\R_+\times\R_+$. 
Then the algorithm observes the covariates, chooses an action $a\in\{1,2\}$, 
and observes a reward $r_a\in \R_+$. It is only guaranteed that there exists 
a unit vector $\wstar$ so that $\E_{(r_1,r_2)\sim D^t}[\sign(\wstar\cdot\x^1-\wstar\cdot\x^2)(r_a-r_b)]\geq \Delta$ for some $\Delta>0$.

Despite this superficial similarity, the work of \cite{DKTZonline} 
has no new implications on the sample complexity of PAC learning Massart 
halfspaces with a margin. Specifically, 
they achieve a regret bound of $O(T^{3/4}/\gamma)$. 
If one translates this bound to a sample complexity upper bound for PAC learning, 
one would obtain a bound of $\Omega(1/(\eps^4\gamma^8))$ ---  
which is quantitatively worse than prior work of~\cite{DGT19, CKMY20}. 

At a technical level, our work leverages this sequence of loss functions as subgradients 
of the potential function $\Phi(\vec w)=\|\w-\wstar\|_2^2$. 
Via a novel analysis, we show that these subgradients $\Omega(\eps)$-correlate 
with the direction of $\w-\wstar$. This in turn means that we can expect 
a decrease of order $\Omega(\lambda \eps)$ in each iteration, 
where $\lambda$ is the corresponding step-size, 
as long as we get 0-1 error more than $\eta+\eps$. 
This structural understanding suffices for obtaining an algorithm, 
based on a separation oracle, 
that achieves a sample complexity of $\widetilde O(1/(\gamma^4\eps^2))$. 
In order to obtain an algorithm with near-optimal sample complexity (and runtime), 
we required additional new ideas as elaborated in the body of the paper.

\section{Learning Margin Massart Halfspaces via Cutting Planes}
\label{ssec:online}

In this section, we show how to use the cutting-planes method 
along with \Cref{lem:linear-time} to efficiently learning margin 
Massart Halfspaces using $\widetilde O(1/(\gamma^4\eps^2))$ samples.

Specifically, we establish the following result: 

\begin{theorem}[Learning Margin Massart Halfspaces with Cutting Planes]\label{thm:onlinemain} 
Let $D$ be a distribution on $\mathbb{S}^{d-1} \times \{\pm 1\}$ which satisfies the 
$\eta$-Massart noise condition with respect to the $\gamma$-margin 
halfspace $f(\x) = \sign(\wstar\cdot\x)$. 
Given $N=\Theta(\log(1/(\gamma\delta)/(\gamma^4\eps^2))$ i.i.d.\ samples from $\D$, 
there is a $\poly(d,N)$ time algorithm that returns a vector $\hat{\vec w}$ such that 
$\mathrm{err}_{\D}(\hat{\vec w})\leq \eta+\eps$ with probability at least $1-\delta$. 
\end{theorem}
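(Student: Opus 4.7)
The plan is to treat the Structural Lemma (\Cref{lem:linear-time}) as a separation oracle inside a cutting-plane method on the unit ball in $\R^d$. I would initialize the candidate set $K_0=\{\vec w\in\R^d:\|\vec w\|_2\le 1\}$, which contains $\wstar$. In iteration $t$, I draw a fresh sample of size $n=\widetilde O(\log(T/\delta)/(\gamma^2\eps^2))$ forming an empirical distribution $\widehat D_n^t$, and query an approximate centroid $\w_t$ of $K_{t-1}$. If $\err_{\widehat D_n^t}(\w_t)\le \eta+\eps/2$ I mark $\w_t$ as a candidate for the final output; otherwise, applying \Cref{lem:linear-time} to $\widehat D_n^t$ gives
\[
\vec G^1_{\widehat D_n^t}(\w_t)\cdot(\w_t-\wstar)\ \ge\ 2\bigl(\err_{\widehat D_n^t}(\w_t)-\eta\bigr)\ \ge\ \eps,
\]
so that $H_t:=\{\vec u:\vec G^1_{\widehat D_n^t}(\w_t)\cdot(\w_t-\vec u)\ge \eps\}$ contains $\wstar$. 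I update $K_t=K_{t-1}\cap H_t$ and iterate.

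The key calculation is that $\|\vec G^1_{\widehat D_n^t}(\w_t)\|_2\le 4/\gamma$ by construction of the clipped stochastic gradient, so the separating hyperplane sits at Euclidean distance at least $\eps\gamma/4$ from $\w_t$. Standard cutting-plane convergence with margin-$\Omega(\eps\gamma)$ cuts on the unit ball then bounds the number of iterations by $T=O(\log(1/\eps)/\gamma^2)$, as alluded to in \Cref{sec:main}. At termination I retain the list $\{\w_1,\ldots,\w_T\}$ and run a tournament: using an additional $\widetilde O(\log(T/\delta)/((1-2\eta)\eps))$ fresh samples, as in~\cite{Massart2006}, I estimate the 0-1 error of each iterate and return the argmin. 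This lifts the empirical success guarantee $\err_{\widehat D_n^t}(\w_t)\le\eta+\eps/2$ to the population bound $\err_D(\widehat{\vec w})\le\eta+\eps$ with probability at least $1-\delta$.

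The total sample cost is $T\cdot n$ plus the tournament, yielding $\widetilde O(1/(\gamma^4\eps^2))$, and the runtime is $\poly(d,N)$ because each iteration performs only gradient and error evaluations on the current batch together with an ellipsoid-style update of $K_t$. The main obstacle I anticipate is pinning down the interaction between the noisy oracle and the cutting-plane dynamics. The Structural Lemma is deterministic on the empirical distribution, so $H_t$ is strictly valid whenever $\err_{\widehat D_n^t}(\w_t)>\eta+\eps/2$, but one must separately guard against (a) discarding a good iterate because the empirical error underestimates by more than $\eps/4$, and (b) wasting an iteration because the oracle refuses to cut on a $\w_t$ that is in fact good on the population. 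Taking $n=\widetilde O(1/(\gamma^2\eps^2))$ and union-bounding Hoeffding over the $T$ adaptively chosen iterates handles both issues and accounts for the final sample complexity $\widetilde O(1/(\gamma^4\eps^2))$ claimed in the theorem.
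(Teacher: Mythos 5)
Your overall route is the same as the paper's appendix proof (a separation oracle built from \Cref{lem:linear-time}, fed into a cutting-plane method, with a final tournament over the iterates), but as written it has two concrete gaps. First, your cut normal $\vec G^1_{\widehat D_n^t}(\w_t)$ is not computable from data: $\vec g^1(\w,\x)$ involves $\E_{y\sim D_y(\x)}[y]=(1-2\eta(\x))\sign(\wstar\cdot\x)$, i.e.\ the unknown noise function and the unknown target. The algorithm can only form the full empirical gradient $\vec G_{\widehat D_n^t}(\w_t,\w_t)=\vec G^1_{\widehat D_n^t}(\w_t)+\vec G^2_{\widehat D_n^t}(\w_t)$, and to use it as a cut you must show that the unobservable error term $\vec G^2_{\widehat D_n^t}(\w_t)$ (zero-mean, but not zero on the sample) has norm at most $O(\eps)$ with high probability. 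That requires a vector concentration bound for bounded vectors of norm $\le 4/\gamma$ (the paper's \Cref{fct:Bennet-vectors}, giving exactly the $n\gtrsim \log(1/\delta)/(\eps^2\gamma^2)$ you quote); the Hoeffding union bound you invoke only controls empirical 0-1 errors and does not address this. This replacement of $\vec G^1$ by the observable $\vec G$ plus concentration is precisely the content of \Cref{corr:online}, and it is missing from your argument.

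Second, the iteration bound $T=O(\log(1/\eps)/\gamma^2)$ is unjustified and, as stated, dimension-free, which cutting-plane methods are not: the guarantee in \Cref{fct:cutting-planes} is $O(d\log(Rd/r))$ oracle calls, coming from a volume argument, and there is no standard result of the form ``margin-$\Omega(\eps\gamma)$ cuts on the unit ball $\Rightarrow$ $O(\log(1/\eps)/\gamma^2)$ iterations.'' The figure quoted in \Cref{ssec:res} already presupposes that the dimension has been reduced to $\widetilde O(1/\gamma^2)$; the paper achieves this by a Johnson--Lindenstrauss preprocessing step (the remark preceding \Cref{thm:online}), which is what turns the generic bound $\widetilde O(d/(\eps^2\gamma^2))$ of \Cref{thm:online} into the claimed $\widetilde O(1/(\eps^2\gamma^4))$ of \Cref{thm:onlinemain}. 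Without that step your scheme gives $T=O(d\log(d/(\eps\gamma)))$ calls and sample complexity growing with $d$. (Exploiting the cut depth in a dimension-free way is essentially the potential argument of \Cref{thm:main-detailed}, but that needs $\Theta(1/(\eps^2\gamma^2))$ iterations, which would be far worse for your batch-per-iteration scheme.) A smaller issue: with your cut $H_t=\{\vec u:\vec G^1_{\widehat D_n^t}(\w_t)\cdot(\w_t-\vec u)\ge\eps\}$ and empirical error only $\ge\eta+\eps/2$, the lemma guarantees merely $\vec G^1\cdot(\w_t-\wstar)\ge\eps$, so the only point certified to survive all cuts is $\wstar$ itself, a set of zero volume, and the volume-based termination argument collapses; you need slack in the cut level (so that a ball of radius $\Omega(\eps\gamma)$ around $\wstar$ survives) or, as the paper does, the observation that the entire ball of radius $\gamma/2$ around $\wstar$ consists of points that classify exactly like $\wstar$ and hence have error at most $\eta$.
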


\begin{remark}
{\em   We can always assume that $d=\widetilde O(1/\gamma^2)$. 
This holds since we can efficiently preprocess the data, 
using the Johnson-Lindenstrauss transform~\cite{JohnsonLindenstrauss:84}. 
Similar dimension-reduction steps have been use in prior work, e.g.,~\cite{CKMY20, DDKWZ23}.}
\end{remark}

Given the above remark, it suffices to establish the following:

\begin{theorem}\label{thm:online} 
Let $D$ be a distribution on $\mathbb{S}^{d-1} \times \{\pm 1\}$ which satisfies the 
$\eta$-Massart noise condition with respect to the $\gamma$-margin 
halfspace $f(\x) = \sign(\wstar\cdot\x)$. 
Given $N=\Theta(d\log(1/(\gamma\delta)/(\gamma^2\eps^2))$ i.i.d.\ samples from $\D$, 
there is a $\poly(d,N)$ time algorithm that returns a vector $\hat{\vec w}$ such that 
$\mathrm{err}_{\D}(\hat{\vec w})\leq \eta+\eps$ with probability at least $1-\delta$. 
\end{theorem}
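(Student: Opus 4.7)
\textbf{Proof plan for \Cref{thm:online}.} The plan is to use \Cref{lem:linear-time} as a separation oracle inside a polynomial-time cutting-planes algorithm on the unit ball $B_1 \subset \R^d$. I would maintain a polytope $K_t$ with $K_0 = B_1$, and at each iteration produce the Vaidya (analytic) center $\w^t$ of $K_t$, draw a fresh batch of $N_t$ samples to form $\widehat D_{N_t}$, and first test whether $\err_{\widehat D_{N_t}}(\w^t) \leq \eta + \eps/2$; if so, I would output $\w^t$ and halt. Otherwise \Cref{lem:linear-time} gives $\vec G^1_{\widehat D_{N_t}}(\w^t)\cdot (\w^t - \wstar) \geq 2(\err_{\widehat D_{N_t}}(\w^t) - \eta) \geq \eps$, so I would add the cut $\{\w : \vec G_{\widehat D_{N_t}}(\w^t,\w^t)\cdot(\w^t - \w) \geq \eps/2\}$ to $K_t$ to obtain $K_{t+1}$.

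Since only $\vec G = \vec G^1 + \vec G^2$ is observable, I would control the noise term $\vec G^2_{\widehat D_{N_t}}(\w^t)$, which is the average of $N_t$ independent zero-mean vectors of norm at most $4/\gamma$ (by construction of $\vec g^2$). A vector Bernstein bound yields $\|\vec G^2_{\widehat D_{N_t}}(\w^t)\|_2 \leq \eps/8$ with probability $1-\delta/(3T)$ whenever $N_t = \Theta(\log(dT/\delta)/(\gamma^2 \eps^2))$. Since $\|\w^t - \wstar\|_2 \leq 2$, this gives $\vec G \cdot (\w^t - \wstar) \geq \eps - \eps/4 = 3\eps/4$, validating the cut with slack $\eps/2$. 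A separate Hoeffding bound on $\err_{\widehat D_{N_t}}(\w^t)$ using $\tilde O(\log(T/\delta)/\eps^2)$ samples ensures the halting test commits additive error at most $\eps/2$ compared with $\err_D(\w^t)$. A union bound over $T$ iterations keeps $\wstar \in K_t$ for all $t$ with probability $1-\delta$.

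To bound $T$, I would use the margin assumption to exhibit a small ball $B(\wstar, r_0)$ with $r_0 \eqdef \eps\gamma/16$ that survives every cut. For any $\w' \in B(\wstar, r_0)$, the bound $\|\vec G\|_2 \leq 4/\gamma$ yields $\vec G \cdot (\w^t - \w') \geq 3\eps/4 - (4/\gamma)r_0 \geq \eps/2$, so $B(\wstar, r_0) \subseteq K_t$ is invariant. Vaidya's cutting-planes method (or Lee--Sidford--Wong) is guaranteed to find a feasible point after $T = O(d \log(R/r_0)) = O(d \log(1/(\eps\gamma)))$ queries; otherwise the polytope $K_T$ would shrink below the volume of $B(\wstar, r_0)$ while still containing it, a contradiction. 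Hence the algorithm halts with some $\w^t$ satisfying $\err_D(\w^t) \leq \eta + \eps$, and the total sample budget is $T \cdot N_t = \tilde O(d/(\gamma^2 \eps^2))$, matching the claim.

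The main obstacle is the joint calibration of the cut margin ($\eps/2$) and the surviving ball radius ($r_0 = \Theta(\eps\gamma)$): the cut must be tight enough that \Cref{lem:linear-time}'s signal dominates the noise from $\vec G^2$, yet loose enough that $B(\wstar, r_0)$ is not cut off despite $\|\vec G\|_2$ being as large as $4/\gamma$. A secondary subtlety is arranging that $\w^t$ is independent of the samples used at step $t$; using disjoint batches per iteration neatly avoids having to prove uniform concentration of $\vec G$ over all of $B_1$.
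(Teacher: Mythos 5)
Your proposal is correct and follows essentially the same route as the paper's: \Cref{lem:linear-time} plus per-round vector concentration of the label-noise part $\vec G^2$ (the content of \Cref{corr:online}, via \Cref{fct:Bennet-vectors}) turned into a separation oracle for a cutting-planes method (\Cref{fct:cutting-planes}), with fresh batches per query so that $\w^t$ is independent of the samples used at round $t$. The one genuine difference is how termination is argued. The paper takes the target body to be the ball $B(\wstar,\gamma/2)$, using the structural fact that every vector in it has margin $\gamma/2$ and classifies exactly as $\wstar$ (hence has error at most $\eta$), which gives $T=O(d\log(d/\gamma))$ oracle calls; note, however, that its separation statement (\Cref{corr:online}) only separates the query point from $\wstar$ itself, so the claim that the cuts never remove this $\gamma/2$-ball is left implicit. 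You instead keep alive the much smaller ball $B(\wstar,\Theta(\eps\gamma))$, justified purely quantitatively from the slack $\vec G\cdot(\w^t-\wstar)\geq 3\eps/4$ together with $\|\vec G\|_2\leq 4/\gamma$, and conclude via the standard volume argument. This costs an extra $\log(1/\eps)$ in the iteration count (and hence in samples), which is absorbed by the $\widetilde O$ and is comparable to the paper's own logarithmic slack, while making the ``which convex body survives every cut'' step fully explicit; your halting test on the empirical error with a separate Hoeffding bound is an equivalent repackaging of the paper's use of the population error inside \Cref{corr:online}. In short: same key lemma, same concentration, same black-box cutting planes; your survival-of-a-small-ball argument is a slightly more self-contained (and marginally more wasteful in log factors) substitute for the paper's margin-ball argument.
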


The idea of using the cutting plane method is slightly adapted from \cite{CKMY20}.
Given access to a separation oracle for a convex set $\mathcal K$, 
we can find a point inside the set $\mathcal K$ by querying the separation oracle $O(d\log d)$ times. The difference with \cite{CKMY20} is that we are using 
a more sophisticated (and sample efficient) separation oracle.  
This allows us to use $O(1/\eps^2)$ samples, instead of $O(1/\eps^3)$ samples, and leads
to the optimal sample complexity as a function of $\eps$ (but not $\gamma$).

\begin{fact}\label{fct:cutting-planes}
    Suppose that $\mathcal K$ is an (unknown) convex body in $\R^d$ which contains a Euclidean ball of radius $r> 0$ and contained in a Euclidean ball centered at
the origin of radius $R> 0$. There exists an algorithm which, given access to a separation oracle
for $\mathcal K$, finds a point $\x^\ast \in \mathcal K$, runs in time $\poly(\log(R/r), d)$, and makes $O(d \log(Rd/r))$ calls to the
separation oracle.
\end{fact}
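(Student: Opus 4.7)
The plan is to invoke the classical Vaidya volumetric-center cutting-plane method (alternatively, the more recent Lee-Sidford-Wong scheme), and sketch why it achieves the stated oracle and time complexity. First I would initialize $P_0$ to be a bounded polytope containing the enclosing ball $B(0,R) \supseteq \mathcal{K}$ --- e.g.\ a simplex or hypercube with $O(d)$ facets and diameter $O(R\sqrt{d})$. At iteration $t$ I would compute the volumetric center $c_t$ of $P_t$, namely the minimizer of the log-determinant barrier $F(x) = \tfrac{1}{2}\log\det(\nabla^2 \phi(x))$, where $\phi(x) = -\sum_i \log(b_i - a_i \cdot x)$ is the standard log-barrier of $P_t = \{x : a_i \cdot x \leq b_i\}$. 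Then query the separation oracle at $c_t$: if $c_t \in \mathcal{K}$, output $c_t$; otherwise use the returned halfspace $H_t \supseteq \mathcal{K}$ with $c_t \notin H_t$, set $P_{t+1} = P_t \cap H_t$, and periodically prune slack facets so that $P_{t+1}$ retains $O(d)$ facets.

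The correctness invariant $\mathcal{K} \subseteq P_t$ is preserved inductively. Since $\mathcal{K}$ contains a Euclidean ball of radius $r$, we have the floor $\mathrm{vol}(P_t) \geq \mathrm{vol}(\mathcal{K}) \geq c_d r^d$ where $c_d = \Omega(d^{-d/2})$. The crux is Vaidya's amortized potential analysis: each cut through the volumetric center increases $F(c_t)$ by an additive $\Omega(1)$, while each facet removal decreases it by a smaller constant, and $F(c_t)$ is tied, up to lower-order terms, to $-\tfrac{1}{2}\log \mathrm{vol}(P_t)$. Thus $\mathrm{vol}(P_{t+1}) \leq (1 - \Omega(1))\, \mathrm{vol}(P_t)$, and combining with the volume floor forces $t = O(d \log(R\sqrt{d}/r) + d\log d) = O(d \log(Rd/r))$ iterations before the oracle must report $c_t \in \mathcal{K}$.

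For the runtime, each iteration reduces to computing the volumetric center to sufficient accuracy via a constant number of Newton steps on $F$, each involving forming and inverting a $d \times d$ matrix of leverage scores; this gives $\poly(d)$ arithmetic per iteration with $\poly(\log(R/r), d)$ bits of precision (sufficient because the inscribed ball of radius $r$ bounds boundary curvature away from zero). Multiplying by $O(d \log(Rd/r))$ iterations yields the claimed $\poly(\log(R/r), d)$ total runtime. The main obstacle is precisely the volumetric potential analysis --- establishing the $\Omega(1)$ additive growth of $F$ per cut and controlling the effect of facet pruning --- which is the technical core of Vaidya's argument; I would invoke it as a black box rather than re-derive it here.
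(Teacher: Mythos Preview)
The paper does not prove this statement at all: it is recorded as a \emph{Fact}, i.e., a known result quoted without proof and used as a black box in the proof of \Cref{thm:online}. So there is no ``paper's own proof'' to compare against.

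Your sketch is a faithful high-level outline of Vaidya's volumetric-center method, which is indeed one of the standard references for exactly this oracle-complexity bound. The ingredients you name --- maintaining a polytope with $O(d)$ facets, querying the separation oracle at the volumetric center, the amortized $\Omega(1)$ growth of the log-det potential per cut, the volume floor $\mathrm{vol}(\mathcal K)\ge c_d r^d$, and the resulting $O(d\log(Rd/r))$ iteration bound --- are the right ones, and the per-iteration $\poly(d)$ arithmetic cost is also correct. Two minor points worth tightening if you were to flesh this out: (i) the link you assert between $F(c_t)$ and $-\tfrac12\log\mathrm{vol}(P_t)$ is not quite the way Vaidya's analysis proceeds (the argument bounds the number of add/drop operations directly via the potential, rather than via volume), though the conclusion is the same; and (ii) you correctly flag that the facet-pruning analysis is the delicate part and invoke it as a black box --- which is entirely appropriate here, and is effectively what the paper does by citing the fact wholesale.
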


We first show that if we get enough samples, we can 
efficiently approximate the gradients $\vec G(\vec w,\vec w)$. 
Formally, we have: 

\begin{proposition}[Separation Oracle]\label{corr:online}
   Let $\eps,\delta\in(0,1)$ and let $D$ be a distribution on $\mathbb{S}^{d-1} \times\{\pm 1\}$ satisfying the $\eta$-Massart noise condition with respect to the halfspace  $f(\x)=\sign(\wstar\cdot\x)$. Fix $\vec w\in \R^d$ with $\|\vec w\|_2\leq 1$. Let $N \gtrsim \log(1/(\gamma\delta))/(\eps^2\gamma^2))$ and $\widehat{D}_N$ be the corresponding empirical distribution. 
     Then, with probability at least $1-\delta$, it holds that
    \[
  \vec G_{\widehat{D}_N}(\w,\w)\cdot (\w-\wstar)\geq 2(\mathrm{err}_{D}(\w)-\eta)-\eps\;.
    \] 
\end{proposition}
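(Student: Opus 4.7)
The plan is to combine the Structural Lemma (\Cref{lem:linear-time}) with two applications of Hoeffding's inequality on scalar quantities. Concretely, I will use the decomposition $\vec g(\w,\w,\x,y) = \vec g^1(\w,\x) + \vec g^2(\w,\x,y)$ from the proof of \Cref{thm:main-detailed}, which gives
\[
\vec G_{\widehat{D}_N}(\w,\w)\cdot(\w-\wstar) = \vec G^1_{\widehat{D}_N}(\w)\cdot(\w-\wstar) + \vec G^2_{\widehat{D}_N}(\w)\cdot(\w-\wstar).
\]
Applying \Cref{lem:linear-time} on the empirical distribution directly yields
$\vec G^1_{\widehat{D}_N}(\w)\cdot(\w-\wstar) \geq 2(\mathrm{err}_{\widehat{D}_N}(\w) - \eta)$, so the task reduces to (i) relating $\mathrm{err}_{\widehat{D}_N}(\w)$ to $\mathrm{err}_{D}(\w)$ and (ii) showing that $\vec G^2_{\widehat{D}_N}(\w)\cdot(\w-\wstar)$ is at most $\eps/2$ in absolute value.

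For (i), since $\w$ is fixed, the random variable $\1\{\sign(\w\cdot\x)\neq y\}$ is $\{0,1\}$-valued, so a standard Hoeffding bound gives $\mathrm{err}_{\widehat D_N}(\w) \geq \mathrm{err}_D(\w) - \eps/4$ with probability at least $1-\delta/2$, as soon as $N \gtrsim \log(1/\delta)/\eps^2$. For (ii), I would first observe that the population mean of $\vec g^2(\w,\x,y)\cdot(\w-\wstar)$ vanishes: by construction, $\E_{y\sim D_y(\x)}[\E[y\mid\x]-y]=0$, so integrating also against $\D_{\x}$ gives zero. Each summand is bounded, since $\|\x\|_2 = 1$, the clipping ensures $W(\w\cdot\x,\gamma/2) \leq 2/\gamma$, and $\|\w-\wstar\|_2 \leq 2$; together these give
\[
\bigl|\vec g^2(\w,\x,y)\cdot(\w-\wstar)\bigr| \leq |\E[y\mid\x]-y|\cdot W(\w\cdot\x,\gamma/2)\cdot \|\x\|_2\cdot\|\w-\wstar\|_2 \leq \frac{8}{\gamma}.
\]
Hoeffding then yields $|\vec G^2_{\widehat{D}_N}(\w)\cdot(\w-\wstar)|\leq \eps/2$ with probability at least $1-\delta/2$, provided $N \gtrsim \log(1/\delta)/(\gamma^2\eps^2)$, which is the dominant sample requirement.

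Combining the two events via a union bound, with probability at least $1-\delta$,
\[
\vec G_{\widehat{D}_N}(\w,\w)\cdot(\w-\wstar) \geq 2(\mathrm{err}_{\widehat D_N}(\w)-\eta) - \eps/2 \geq 2(\mathrm{err}_{D}(\w)-\eta) - \eps,
\]
which is the claimed inequality. I do not expect any serious obstacle: the argument is essentially bookkeeping on top of \Cref{lem:linear-time}. The only point requiring care is that the variance bound on $\vec g^2\cdot(\w-\wstar)$ is saved entirely by the clipping threshold $\gamma/2$ inside $W(\cdot,\gamma/2)$, and this clipping is exactly what makes the final sample complexity $\widetilde{O}(1/(\gamma^2\eps^2))$ rather than something worse; without the clipping, $1/|\w\cdot\x|$ could be unbounded and Hoeffding would not apply.
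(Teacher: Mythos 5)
Your proof is correct, and it uses the same skeleton as the paper (the decomposition $\vec G_{\widehat D_N}(\w,\w)=\vec G^1_{\widehat D_N}(\w)+\vec G^2_{\widehat D_N}(\w)$ together with \Cref{lem:linear-time}), but the concentration step is handled differently. The paper takes the expectation of \Cref{lem:linear-time} to get $\vec G^1_{D}(\w)\cdot(\w-\wstar)\geq 2(\mathrm{err}_D(\w)-\eta)$ and then controls the $\ell_2$-norm deviations $\|\vec G^1_{\widehat D_N}(\w)-\E[\vec g^1]\|_2$ and $\|\vec G^2_{\widehat D_N}(\w)-\E[\vec g^2]\|_2$ via the vector Bernstein/Bennett inequality of \Cref{fct:Bennet-vectors}, afterwards projecting onto $\w-\wstar$. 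You instead keep the empirical inequality $\vec G^1_{\widehat D_N}(\w)\cdot(\w-\wstar)\geq 2(\mathrm{err}_{\widehat D_N}(\w)-\eta)$ (valid pointwise, since the proof of \Cref{lem:linear-time} is sample-by-sample) and apply two scalar Hoeffding bounds: one on the $\{0,1\}$-valued indicators to relate $\mathrm{err}_{\widehat D_N}(\w)$ to $\mathrm{err}_D(\w)$, and one on the mean-zero projections $\vec g^2(\w,\x,y)\cdot(\w-\wstar)$, which are bounded by $8/\gamma$ thanks to the clipping at $\gamma/2$. Both routes give $N\gtrsim\log(1/\delta)/(\eps^2\gamma^2)$, which is dominated by the stated sample size; your version is more elementary (it needs only one-dimensional Hoeffding, legitimately because $\w$ is fixed before the sample is drawn, so no uniformity over $\w$ is required) and yields cleaner constants, while the paper's vector-norm concentration is stronger than what the scalar conclusion actually needs. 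The only bookkeeping caveat on either side is the exact constant in front of $\eps$, which is absorbed by rescaling.
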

\begin{proof}
    By construction, 
    $\vec G_{\widehat{D}_N}(\w,\w)=\vec G^1_{\widehat{D}_N}(\w)+\vec G^2_{\widehat{D}_N}(\w)$ and by~\Cref{lem:linear-time} 
    we have that
    $\vec G_{\widehat{D}_N}^1(\w)\cdot (\w-\wstar)\geq 2(\mathrm{err}_{\widehat{D}_N}(\w)-\eta)$. 
    By definition, we have $\E_{(\x^{(1)},y^{(1)}), \ldots,  (\x^{(N)},y^{(N)})\sim \D}[\vec  G^2_{\widehat{D}_N}(\w)]=0$, where the expectation is taken with respect to the sample set. 
    Note that the norm of $\vec g^1(\w,\x),\vec g^2(\w,\x,y)$, i.e., $\|\vec g^1(\w,\x)\|_2,\|\vec g^2(\w,\x,y)\|_2$, is bounded pointwise
    from above by $4/\gamma$ for all $\vec w\in \R^d$. 
    This can be seen as $\|\x\|_2\leq 1$, $W(\cdot,\gamma/2)\leq 2/\gamma$, 
    and $(1-2\eta),(1-2\eta(\x))\leq 1$. 
    
    We use the following concentration inequality to show that our sample size is enough to guarantee that the estimated gradient is close to its population version.
    
    \begin{fact}[\cite{SZ07}, Lemma 1]\label{fct:Bennet-vectors}
    Let $\vec Z_1,\ldots,\vec Z_n\in \R^d$ be random vectors such that for each $i\in[n]$ it holds $\|\vec Z_i\|_2\leq M < \infty$ almost surely and let $\sigma^2=\sum_{i=1}^n\E[\|\vec Z_i\|_2^2]$. Then, we have that for any $\eps > 0,$
    \[
    \pr\left[\left\|\frac{1}{n}\sum_{i=1}^n \left(\vec Z_i- \E[\vec Z_i]\right)\right\|_2\geq \eps\right]\leq 2\exp\left(-\frac{n\eps}{2M}\log\bigg(1+\frac{n M\eps}{\sigma^2}\bigg)\right)\;.
    \]
    \end{fact}
   Using \Cref{fct:Bennet-vectors}, along with the inequality $\log(1+z)\geq z/2$, for $z\in(0,1)$, we get that if  $N\geq \Theta(\frac{\log(1/\delta)}{(\eps\gamma)^2})$, with probability at least $1-\delta$, we have 
      \begin{equation}
             \left\| \vec G^1_{\widehat{D}_N}(\w)-\Ey[\vec g^1(\w,\x)] \right\|_2\leq \eps\;,\label[ineq]{ineq:1}
      \end{equation}   
   and
   \begin{equation}
       \left\|\vec G^2_{\widehat{D}_N}(\w)-\Ey[\vec g^2(\w,\x,y)] \right\|_2\leq \eps\;.\label[ineq]{ineq:2}   
   \end{equation}

To complete the proof, 
recall that by \Cref{lem:linear-time} it holds 
$\vec G^1_{\widehat{D}_N}(\w)\cdot (\w-\wstar)\geq 2(\mathrm{err}_{\widehat D_N}(\w)-\eta)-\eps$. 
Therefore, by taking the expectation over $D_\x$, we get that
\[
\vec G^1_{D}(\w)\cdot (\w-\wstar)\geq 2(\mathrm{err}_{D}(\w)-\eta)\;.
\]
The proof is completed by recalling that 
$\|\vec G^1_{\widehat{D}_N}(\w)-\Ey[\vec g^1(\w,\x)]\|_2\leq \eps$ from \Cref{ineq:1} and that $\Ey[\vec g^2(\w,\x,y)]=0$.
\end{proof}

Equipped with \Cref{corr:online}, we are ready to prove a weaker version of \Cref{thm:main-detailed} using separation oracles and the cutting plane algorithm. Formally, we show that

\begin{proof}[Proof of \Cref{thm:online}]
Our convex set $\mathcal K$ is a Euclidean ball 
of radius $\gamma/2$ centered at $\wstar$. To see that, 
note that for any $\vec v$ such that $\|\wstar-\vec v\|_2\leq \gamma/2$, we have that 
$|(\wstar-\vec v)\cdot\x|\leq \gamma/2$ for any $\x$ with $\|\x\|_2=1$. 
This implies that
$\gamma/2+\wstar\cdot \x\geq \vec v\cdot\x\geq \wstar\cdot \x -\gamma/2$. 
Moreover, by definition we have that $\wstar\cdot \x\geq \gamma$. 
Hence, if $\wstar\cdot\x\geq 0$, we have that 
$\vec v\cdot\x\geq \gamma/2$; 
and if  $\wstar\cdot\x\leq 0$, we have that $\vec v\cdot\x\leq- \gamma/2$. 
Therefore, this ball contains all the vectors $\vec w$ 
with margin $\gamma/2$ and separates the points in the same way as $\wstar$.

Therefore, as long as we are not in the set $\mathcal K$ 
or the 0-1 error is more than $\eta+\eps$, 
we can use \Cref{corr:online} to construct a new separation oracle. 
By \Cref{fct:cutting-planes}, the maximum number of calls 
to the separation oracle is $T=O(d\log(d/\gamma))$. 
By \Cref{corr:online}, in each round we need 
$n=O(\log(T/\delta))/(\eps^2\gamma^2)$ samples from $\D$ 
to construct a separation oracle. 
Therefore, the maximum number of samples is 
$O(n T)=O(d \log(T/\delta))/(\eps^2\gamma^2)$. 
This completes the proof.
\end{proof}

\section{Omitted Proofs from \Cref{sec:main}}\label{ssec:ommited}

\subsection{Proof of \Cref{clm:dgt-claim}}

\begin{claim}[Claim 2.1 \cite{DGT19}]\label{clm:dgt-claim}
    For any $\vec w,\vec x$, we have that 
    \[\ell_{\lambda}(\vec w,\x,y)=\big(\1\{y(\vec w\cdot\x)\leq 0\}-\lambda\big)|\vec w\cdot\x| \;.\]
\end{claim}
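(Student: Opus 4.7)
The plan is to prove this by a direct two-case analysis, splitting on the sign of $y(\vec w \cdot \x)$. First I would unfold the definition: by construction $\ell_\lambda(\vec w,\x,y) = \mathrm{LeakyReLU}_\lambda(-y(\vec w\cdot\x))$, where $\mathrm{LeakyReLU}_\lambda(t) = (1-\lambda)\1\{t\geq 0\}t + \lambda \1\{t<0\}t$. The key elementary observation to use throughout is that $y\in\{\pm 1\}$, which gives $|y(\vec w\cdot\x)| = |\vec w\cdot\x|$, so that $-y(\vec w\cdot\x)$ equals $+|\vec w\cdot\x|$ or $-|\vec w\cdot\x|$ depending on the sign of $y(\vec w\cdot\x)$.

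In the first case, assume $y(\vec w\cdot\x)\leq 0$, which makes $t := -y(\vec w\cdot\x)\geq 0$ and equal to $|\vec w\cdot\x|$. Then the LeakyReLU reduces to its first branch, yielding $\ell_\lambda(\vec w,\x,y) = (1-\lambda)|\vec w\cdot\x|$. On the right-hand side of the target identity, the indicator $\1\{y(\vec w\cdot\x)\leq 0\}$ is $1$, so the expression becomes $(1-\lambda)|\vec w\cdot\x|$, matching.

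In the second case, assume $y(\vec w\cdot\x) > 0$, so $t = -y(\vec w\cdot\x) < 0$ and $-t = |\vec w\cdot\x|$. Then only the second branch of the LeakyReLU is active, giving $\ell_\lambda(\vec w,\x,y) = \lambda t = -\lambda |\vec w\cdot\x|$. Meanwhile, the indicator on the right-hand side vanishes, leaving $-\lambda |\vec w\cdot\x|$, which again matches.

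There is no real obstacle here; the only subtlety is handling the boundary $y(\vec w\cdot\x) = 0$ (equivalently $\vec w\cdot\x = 0$, since $y\in\{\pm 1\}$) consistently with the convention $\1\{t\geq 0\}$ used in the LeakyReLU definition and the convention $\1\{y(\vec w\cdot\x)\leq 0\}$ on the right-hand side. At this boundary point both sides evaluate to $0$ (since $|\vec w\cdot\x| = 0$), so the identity holds trivially regardless of the tie-breaking convention, and the two cases above together cover all $(\vec w,\x,y)$.
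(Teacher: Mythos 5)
Your proof is correct and follows essentially the same route as the paper: both simply unfold the definition $\ell_\lambda(\vec w,\x,y)=\mathrm{LeakyReLU}_\lambda(-y(\vec w\cdot\x))$ and use $y\in\{\pm 1\}$ to rewrite $-y(\vec w\cdot\x)$ in terms of $|\vec w\cdot\x|$; the paper merges the two branches in one algebraic step with indicators, while you verify the two cases separately (and your remark that the boundary $\vec w\cdot\x=0$ is harmless since both sides vanish is a fine, if optional, addition). No gap here.
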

\begin{proof}
    Recall that 
    \[ \ell_{\lambda}(\vec w,\x,y)=\mathrm{LeakyReLU}_\lambda(-y(\vec w\cdot\x))=(1-\lambda)\1\{y(\vec w\cdot\x)\leq 0\}(-y\vec w\cdot\x)+\lambda\1\{y(\vec w\cdot\x)>0\}(-y\vec w\cdot\x) \;.\]
    Therefore, we have that
    \begin{align*}
          \ell_{\lambda}(\vec w,\x,y)&=(1-\lambda)\1\{y(\vec w\cdot\x)\leq 0\}|y\vec w\cdot\x|-\lambda\1\{y(\vec w\cdot\x)>0\}|y\vec w\cdot\x|
      \\&=\1\{y(\vec w\cdot\x)\leq 0\}|\vec w\cdot\x|-\lambda|\vec w\cdot\x|=\bigg(\1\{y(\vec w\cdot\x)\leq 0\}-\lambda\bigg)|\vec w\cdot\x|\;,
    \end{align*}
where we used that $y\in\{\pm 1\}$.
\end{proof}
\subsection{Proof of \Cref{clm:first-clm}}
We restate and prove the following claim:

   {\bf Claim 2.3.}\quad {\em For any $\x^{(i)} \in R_1$, we have that}
    $
    \vec g^1(\vec w,\x^{(i)})\cdot (\w-\wstar)\geq 2(\mathrm{err}(\vec w,\x^{(i)})-\eta)\;.
    $

    \begin{proof}[Proof of~\Cref{clm:first-clm}]
    For any $\x^{(i)} \in R_1$, we have that
                \begin{align}
        \vec g^1(\vec w,\x^{(i)})\cdot \w&= \bigg((1-2\eta)\sign(\w\cdot\x^{(i)})-(1-2\eta(\x^{(i)}))\sign(\wstar\cdot\x^{(i)})\bigg)\w\cdot\x^{(i)}W(\w\cdot\x^{(i)})\nonumber
        \\&=\bigg((1-2\eta)\sign(\w\cdot\x^{(i)})-(1-2\eta(\x^{(i)}))\sign(\wstar\cdot\x^{(i)})\bigg)\sign(\w\cdot\x^{(i)})\nonumber
        \\&=2(\mathrm{err}(\vec w,\x^{(i)})-\eta)\;, \label{eq42:1}
    \end{align}
    where we used that for any $\x^{(i)}\in R_1$, 
    $W(\w\cdot\x^{(i)})=1/|\w\cdot\x^{(i)}|$, 
    and hence $W(\vec w\cdot \x^{(i)},\gamma/2)\w\cdot\x^{(i)}=\sign(\w\cdot\x^{(i)})$;  
    and that $\mathrm{err}(\vec w,\x^{(i)})=\eta(\x^{(i)})$ if $\sign(\w\cdot\x^{(i)})=\sign(\wstar\cdot\x^{(i)})$ and $1-\eta(\x^{(i)})$ otherwise. 
    
    We now bound the contribution of $\wstar$.
    Since $\eta(\x)\leq \eta$, we have 
    $$(1-2\eta(\x))-(1-2\eta)\sign(\w\cdot\x)\sign(\wstar\cdot\x) \new{\geq} 0 \;.$$ 
    Therefore, we have that
    \begin{align*}
       \vec g^1(\vec w,\x^{(i)})\cdot \wstar&=  \bigg((1-2\eta)\sign(\w\cdot\x)-(1-2\eta(\x))\sign(\wstar\cdot\x)\bigg)\sign(\wstar\cdot\x)|\wstar\cdot\x|W(\w\cdot\x^{(i)})
       \\&= -\bigg((1-2\eta(\x))-(1-2\eta)\sign(\w\cdot\x)\sign(\wstar\cdot\x)\bigg)|\wstar\cdot\x|W(\w\cdot\x^{(i)})\leq 0\;,
    \end{align*}
which gives that $-\vec g^1(\vec w,\x^{(i)})\cdot \wstar\geq 0$. This completes the proof of \Cref{clm:first-clm}.
    \end{proof}
    \subsection{Proof of \Cref{clm:second-clm}}
  We restate and prove the following:
  
    {\bf Claim 2.4.}\quad {\em For any $\x^{(i)}\in R_2$, we have that}
    $
    \vec g^1(\vec w,\x^{(i)})\cdot (\w-\wstar)\geq 2(\mathrm{err}(\vec w,\x^{(i)})-\eta)\;.
    $
\begin{proof}[Proof of \Cref{clm:second-clm}]
        We have that
               \begin{align}
          \vec g^1(\vec w,\x^{(i)})\cdot (\w-\wstar) &= \bigg((1-2\eta)\sign(\w\cdot\x^{(i)})-(1-2\eta(\x^{(i)}))\sign(\wstar\cdot\x^{(i)})\bigg)\left(\frac{\w\cdot\x^{(i)}-\wstar\cdot\x^{(i)}}{\max(\gamma/2,|\w\cdot \x^{(i)}|)}\right)\nonumber
          \\&=  \bigg((1-2\eta)\sign(\w\cdot\x^{(i)})-(1-2\eta(\x^{(i)}))\sign(\wstar\cdot\x^{(i)})\bigg)\left(\frac{\w\cdot\x^{(i)}-\wstar\cdot\x^{(i)}}{\gamma/2}\right)\nonumber\;,
    \end{align}
    where we used that $\max(\gamma/2,|\w\cdot\x^{(i)}|)=\gamma/2$ for any $\x^{(i)}\in R_2$.
    Since $\sgn(\wstar\cdot \x)$ has $\gamma$-margin, we have that $|\wstar\cdot\x^{(i)}|\geq \gamma$. Since $\x^{(i)}\in R_2$, it holds $|\vec w\cdot\x^{(i)}| < \gamma/2$. Therefore,  
    $-\sign(\wstar\cdot\x^{(i)})(\w\cdot\x^{(i)}-\wstar\cdot\x^{(i)})
    =\left(|\wstar\cdot\x^{(i)}|-\sign(\wstar\cdot\x^{(i)})\w\cdot\x^{(i)} \right) 
    \geq \gamma/2$. This in turn implies that
               \begin{align}
          \vec g^1(\vec w,\x^{(i)})\cdot (\w-\wstar) &\geq(1-2\eta(\x^{(i)})-(1-2\eta)\sign(\w\cdot\x^{(i)})\sign(\wstar\cdot\x^{(i)}))\nonumber
          \\&=2(\mathrm{err}(\vec w,\x^{(i)})-\eta)\nonumber\;,
    \end{align}
    completing the proof of \Cref{clm:second-clm}.
\end{proof}

\end{document}